\documentclass[hyphens]{article}
\usepackage{soul}
\usepackage{microtype}
\usepackage{hyperref}
\usepackage{bm}

\usepackage[accepted]{icml2026}

\usepackage{amsmath}
\usepackage{amssymb}
\usepackage{mathtools}
\usepackage{amsthm}



\usepackage{algorithm}
\usepackage{algorithmic}
\usepackage{subcaption}  
\usepackage{multirow}   
\usepackage{booktabs}   
\usepackage{colortbl}
\usepackage{pifont}
\usepackage{makecell}
\usepackage{xcolor}
\renewcommand{\hl}[1]{#1}

\usepackage{times}  
\usepackage{helvet}  
\usepackage{courier}  
\PassOptionsToPackage{hyphens}{url}
\usepackage{url}
\usepackage{graphicx} 
 \usepackage{float}
\usepackage{listings}
\usepackage{natbib}  
\usepackage{caption} 
\usepackage{newfloat}
\usepackage{listings}

\def\vM{{\bm{M}}}  

\def\vW{{\bm{W}}}
\def\vY{{\bm{Y}}}
\def\vX{{\bm{X}}}

\usepackage[capitalize,noabbrev]{cleveref}

\theoremstyle{plain}
\newtheorem{theorem}{Theorem}[section]

\newtheorem{lemma}[theorem]{Lemma}

\theoremstyle{definition}

\newtheorem{assumption}[theorem]{Assumption}
\theoremstyle{remark}

\usepackage[disable,textsize=tiny]{todonotes}
\icmltitlerunning{Rethinking Depth Pruning for Vision Transformers:  A Heterogeneity-Aware Perspective }
\begin{document}

\twocolumn[
  \icmltitle{Rethinking Depth Pruning for Vision Transformers: \\ A Heterogeneity-Aware Perspective}



  \icmlsetsymbol{equal}{*}
\begin{icmlauthorlist}
  \icmlauthor{Zhenfeng Su}{cuhk,huawei}
  \icmlauthor{Kang Zhao}{thu}
  \icmlauthor{Han Bao}{huawei}
  \icmlauthor{Tao Yuan}{huawei}
  \icmlauthor{Zhongzhe Hu}{huawei}
  \icmlauthor{Xianzhi Yu}{huawei}
  \icmlauthor{Wenxuan Wang}{renmin}
\end{icmlauthorlist}

\icmlaffiliation{cuhk}{School of Data Science, The Chinese University of Hong Kong, Shenzhen, China}
\icmlaffiliation{thu}{Tsinghua University, Beijing, China}
\icmlaffiliation{huawei}{Huawei Technologies Co., Ltd., Shenzhen, China}
\icmlaffiliation{renmin}{School of Information, Renmin University of China, Beijing, China}

  \icmlcorrespondingauthor{Kang Zhao}{zhaok14@tsinghua.org.cn}

  \icmlkeywords{Machine Learning, ICML}

  \vskip 0.3in
]



\printAffiliationsAndNotice{}  
\begin{abstract}
While prior studies have successfully compressed vision Transformers (ViTs) through various pruning techniques, most have concentrated on width pruning to achieve significant reductions in model size. Depth pruning, which removes entire layers from a ViT, is notoriously difficult for accuracy recovery despite its potential to deliver higher speedups, limiting the acceleration achieved by existing joint width-and-depth pruning methods. In this work, we reveal that the failure of existing depth pruning methods lies in their neglect of heterogeneity between different layers, and we introduce HetDPT, a heterogeneity-aware depth pruning method that avoids dimension mismatch. Comprehensive experiments on ImageNet-1K, CIFAR-100, COCO, and ADE20K validate our method: HetDPT achieves a 1.58$\times$ speedup for DeiT-B while maintaining accuracy and a 1.39$\times$ speedup for DeiT-S with nearly no accuracy degradation. Furthermore, when combined with width pruning, HetDPT+ sets a new state-of-the-art record in extreme ViT pruning, enhancing the acceleration ratio from 4.24$\times$ to 5.19$\times$ for the Isomorphic-Pruning-2.6G configuration while maintaining near-lossless accuracy; our code is available at \url{https://github.com/Efficient-AI-for-All/HetDPT}.
\end{abstract}

\begin{figure}[h]
    \centering
  \includegraphics[width=0.49\textwidth]{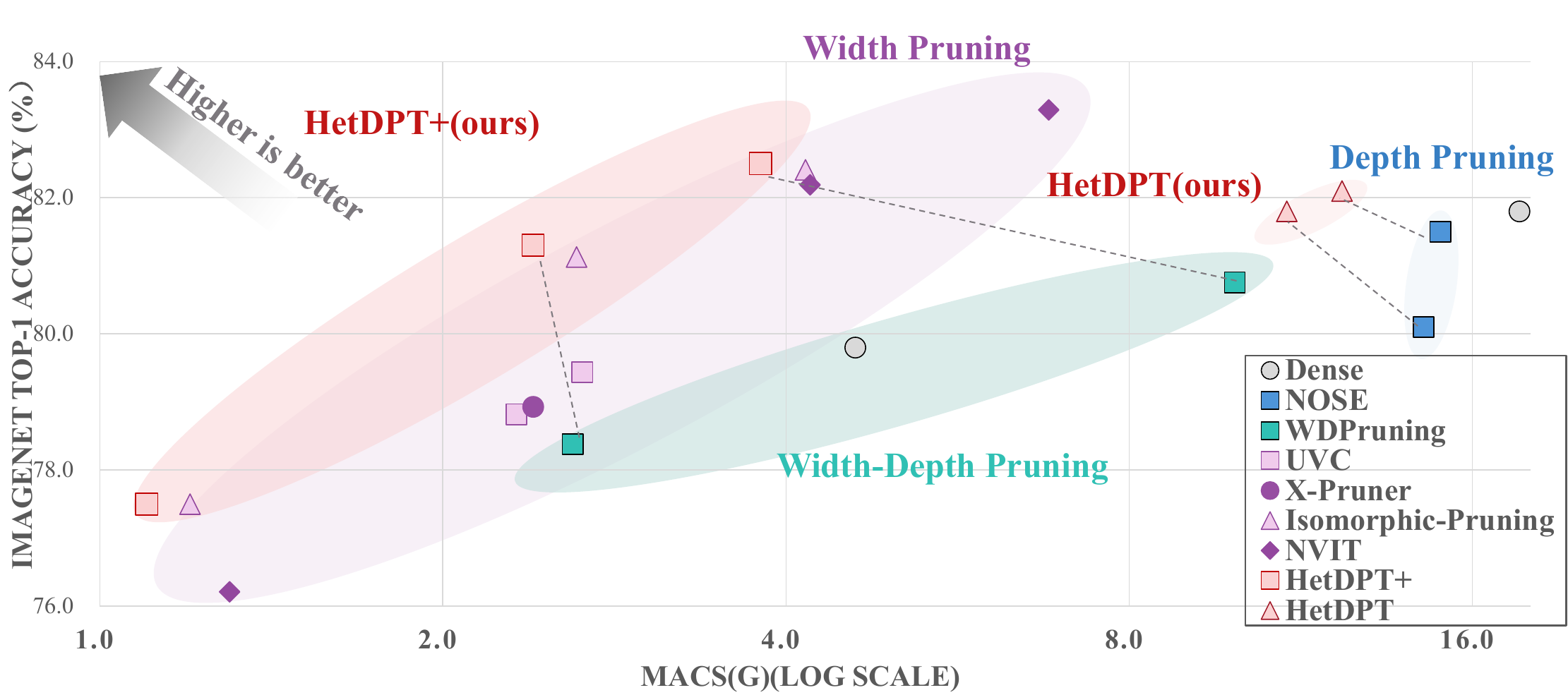}
  \caption{Compared with other work, our HetDPT and HetDPT+ offer a state-of-the-art accuracy-speedup Pareto frontier.}
  \label{fig:pareto}
\end{figure}


\section{Introduction}
Vision Transformers (ViTs) \citep{50650,liu2021Swin,touvron2021going,han2021transformer,chen2021crossvit,li2022efficientformer,wang2022pvt} have demonstrated remarkable performance across various domains.  However, their large parameter counts and high computational costs lead to extended inference latency. 
Structured pruning \citep{he2023structured} \hl{is effective for model compression and is generally easier to translate into practical speedup on mainstream hardware, while unstructured pruning often requires specialized sparse support, especially to realize benefits at very high sparsity levels.}\citep{pietron2023speedup}

\textbf{Depth pruning's superiority and limitation.}
As a kind of \hl{structured pruning}, depth pruning denotes removing entire layers from ViTs. 
Compared to width pruning which only reduces part of channels or attention heads inside a layer, depth pruning delivers significantly higher speedups under equivalent sparsity budgets, as shown in Figure \ref{fig:combined_dp_latency_analysis} (a). 
However, depth pruning typically incurs substantial accuracy degradation, especially with aggressive layer removal. 
Consequently, comprehensive methods that integrate both width and depth pruning are also limited by the depth pruning challenges, resulting in suboptimal performance. 
\begin{figure}[h]
    \centering
    \includegraphics[width=0.4\textwidth]{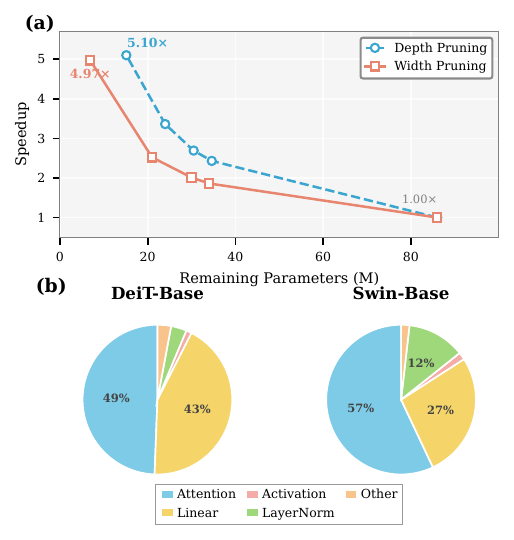}
\caption{Practical inference speed analysis of ViTs.     (a)Speedup comparison: Depth pruning exhibits significantly higher speedup efficiency than width pruning. (b)Latency breakdown of ViTs.}
\label{fig:combined_dp_latency_analysis}
\end{figure}

\textbf{The missing link: heterogeneity in depth pruning.} 
While prior research attributes the accuracy collapse in depth pruning to coarse granularity \citep{he2023structured,mao2017exploring}, we challenge this perspective. 
Our analysis reveals that the true bottleneck arises from the neglect of heterogeneity when pruning layers in a ViT.
As shown in Figure \ref{fig:tradeoff}, individually pruning attention layers or activation function layers leads to drastic accuracy drops at high pruning ratios. Here, an activation function layer refers to the GELU nonlinearity inside the FFN, treated as a structural operator controlled by a binary mask, rather than as a parameterized layer with trainable weights. In contrast, our \textbf{Het}erogeneity-aware \textbf{D}e\textbf{PT}h pruning (HetDPT), which simultaneously prunes distinct layer types while addressing their heterogeneity, significantly improves accuracy retention while maintaining efficiency.
 A detailed analysis on heterogeneity is provided in Section \ref{sec: insights}.
 
\begin{figure}[h]
\centering
\includegraphics[width=0.5\textwidth]{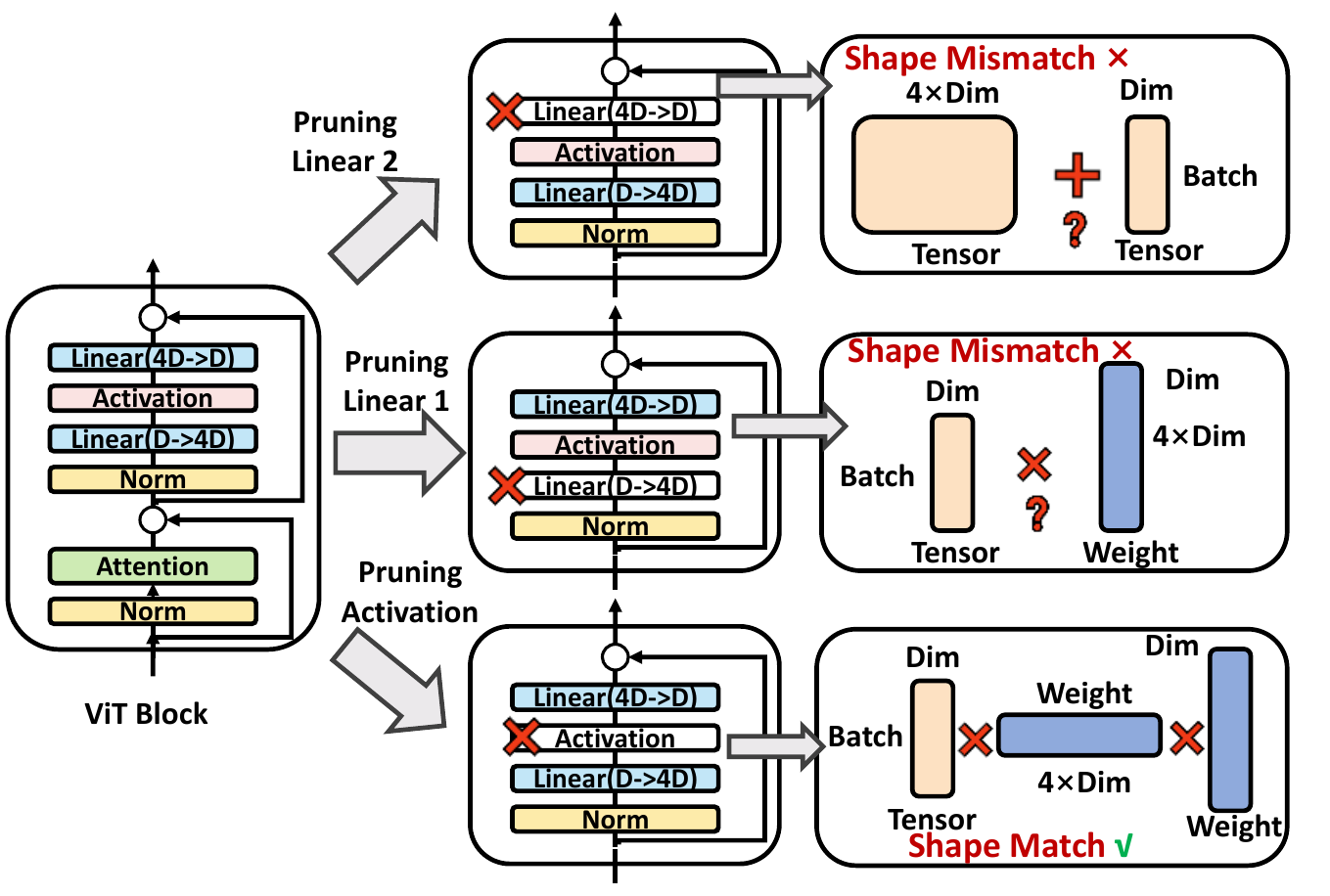}
\caption{The visualization of dimension mismatch.}
\label{fig:dim_mismatch}
\end{figure}

\textbf{Dimension mismatch bottlenecks.} In heterogeneous depth pruning, \hl{the two most time-consuming layer types} in ViTs are generally attention layers and linear layers, which together account for over 50\% of total inference time, as shown in Figure \ref{fig:combined_dp_latency_analysis} (b).
However, simultaneously pruning attention layers and linear layers creates \textit{dimension mismatch}.
As illustrated in Figure \ref{fig:dim_mismatch}, if the first linear layer of a feedforward network (FFN) block in ViTs is removed, the output tensor from the previous attention layer cannot be passed through the second linear layer in the FFN. 
This is because FFNs employ the expansion-contraction mechanism, e.g., $D \rightarrow 4D \rightarrow D$, where $D$ represents the input channel dimension. 
Similarly, pruned second linear layers prevent the output tensor from passing through the subsequent attention layers. In a word, dimension mismatch renders heterogeneously depth-pruned ViTs unworkable.

\textbf{Contribution.} To address heterogeneity in depth pruning while avoiding dimension mismatch, our contributions are threefold: 
\begin{itemize}
\item   \textbf{Pruning activation functions in ViTs.}  
We tackle the dimension mismatch by proposing strategies to remove activation function layers between linear layers. This allows for the natural merging of adjacent linear layers, reducing model depth while seamlessly aligning dimensions. To the best of our knowledge, we are the \textbf{first} to identify and mitigate the redundancy of activation layers in ViTs.

\item   \textbf{ The heterogeneity in depth pruning is revealed and addressed. }  
 We provide the first comprehensive heterogeneity analysis including \textbf{gradient disparity} and \textbf{recovery asymmetry}—two critical phenomena overlooked in prior literature. Guided by these insights, we introduce HetDPT, a two-stage pruning framework equipped with a model accuracy predictor to effectively manage heterogeneity. Notably, we provide \textbf{theoretical analysis} to guarantee the effectiveness of our method.

\item \textbf{Two key state-of-the-art records are established. }With HetDPT, the depth-pruned DeiT-base achieves up to 1.6x speedup while maintaining lossless accuracy, which is the state-of-the-art among depth pruning works. More importantly, building on HetDPT, we further present HetDPT+, a depth-width pruning pipeline that \textbf{establishes a new state-of-the-art benchmark for extreme ViT compression}, as demonstrated in Figure \ref{fig:pareto}. HetDPT+ enhances the ViT inference speedup from 4.24$\times$ to 5.19$\times$ for the Isomorphic-Pruning-2.6G configuration while achieving near-lossless accuracy.  
\end{itemize}

\section{Insights on Heterogeneity}
\label{sec: insights}
\begin{figure}[htbp]
    \centering
    \includegraphics[width=1.0\linewidth]{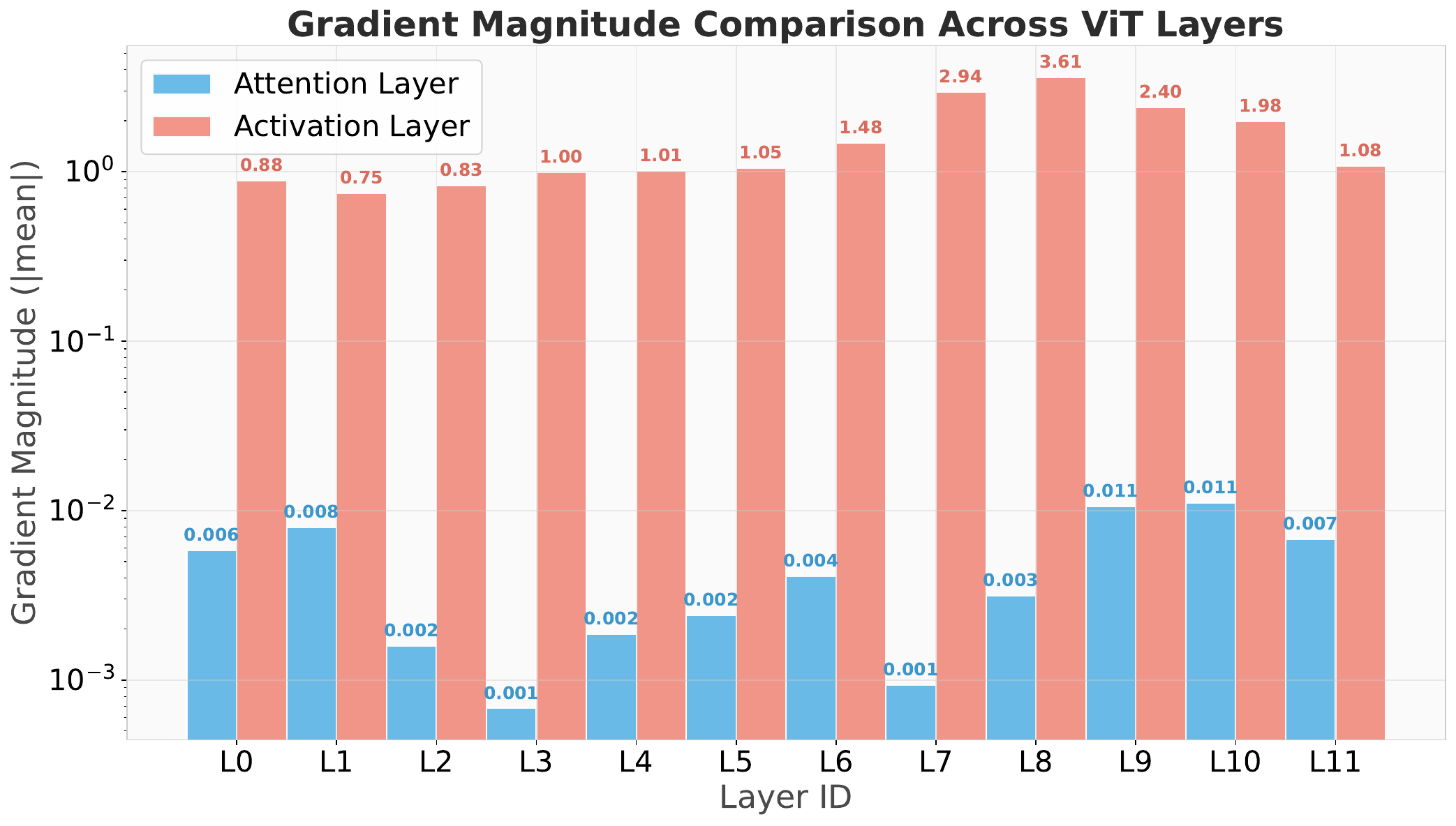}
    \caption{Backpropagation gradient magnitudes across attention and activation layers.}
    \label{fig:Gradient_Disparity}
\end{figure}

 Pruning redundant activation function layers alongside attention layers is non-trivial. Conventional importance metrics, whether gradient-based or handcrafted, are often insufficient in addressing the heterogeneity between attention layers and activation function layers. Specifically, 
We highlight two key observations embodying heterogeneity 
that simultaneously constitute critical challenges in designing an effective heterogeneous depth pruning method.

\textbf{Observation 1: gradient disparity. The attention layers and activation function layers exhibit significant differences in gradient scales during backpropagation, which can lead to biased importance estimations and suboptimal pruning decisions when employing training-based search strategies. } 
To be specific, we assign each layer in a ViT a learnable importance weight parameter and record the gradient magnitudes for each layer after one epoch of forward and backward propagation. As illustrated in Figure~\ref{fig:Gradient_Disparity}, the gradient magnitudes of activation function layers substantially exceed those of attention layers, a trend that persists across increased training epochs. 

\textbf{Challenge 1: failure of gradient-based pruning metrics.} Given the unique training dynamics in ViTs, using gradients to evaluate the importance of these two types of layers results in significantly biased outcomes, with all attention layers constantly outweighing activation  layers.

\textbf{Observation 2: recovery asymmetry.
Attention layer pruning causes moderate initial accuracy drops but requires extensive retraining for recovery, while activation function layer pruning results in severe initial degradation (often $\geq$ 90\%) but enables rapid recovery during fine-tuning.}
To exemplify this phenomenon, we individually prune each layer type in DeiT-S and record accuracy before and after 10-epoch fine-tuning.
Figure~\ref{fig:Recovery_Asymmetry}(a) demonstrates that pruning activation function layers leads to catastrophic accuracy declines, with most layers retaining only 0.1\% accuracy. Conversely, pruning attention layers results in significantly milder degradation, with most layers experiencing less than 5\% accuracy loss.
However, as shown in Figure~\ref{fig:Recovery_Asymmetry}(b), after just 10 epochs of fine-tuning, the accuracy of both layer types converges to comparable levels, highlighting the rapid recovery capability of activation function layers following pruning. 

Importantly, this single-layer analysis should not be interpreted as evidence that attention-only pruning is optimal. It only characterizes local pruning behavior when all other layers remain intact. In multi-layer pruning, repeatedly removing attention layers progressively weakens token-mixing capacity across depth, whereas removing activation layers preserves the linear FFN path and mainly simplifies local nonlinearity. Therefore, the optimal strategy under a global pruning budget is generally budget-dependent and requires mixed allocation across layer types, as verified by Figure~\ref{fig:tradeoff}.

\textbf{Challenge 2: failure of short-sighted pruning metrics. }
The substantial asymmetry in accuracy dynamics between these two types of components in ViTs renders conventional handcrafted pruning metrics ineffective for heterogeneous depth pruning, as these metrics typically reflect only immediate post-pruning accuracy retention.

\begin{figure}[htbp]
    \centering
    \includegraphics[width=1.0\linewidth]{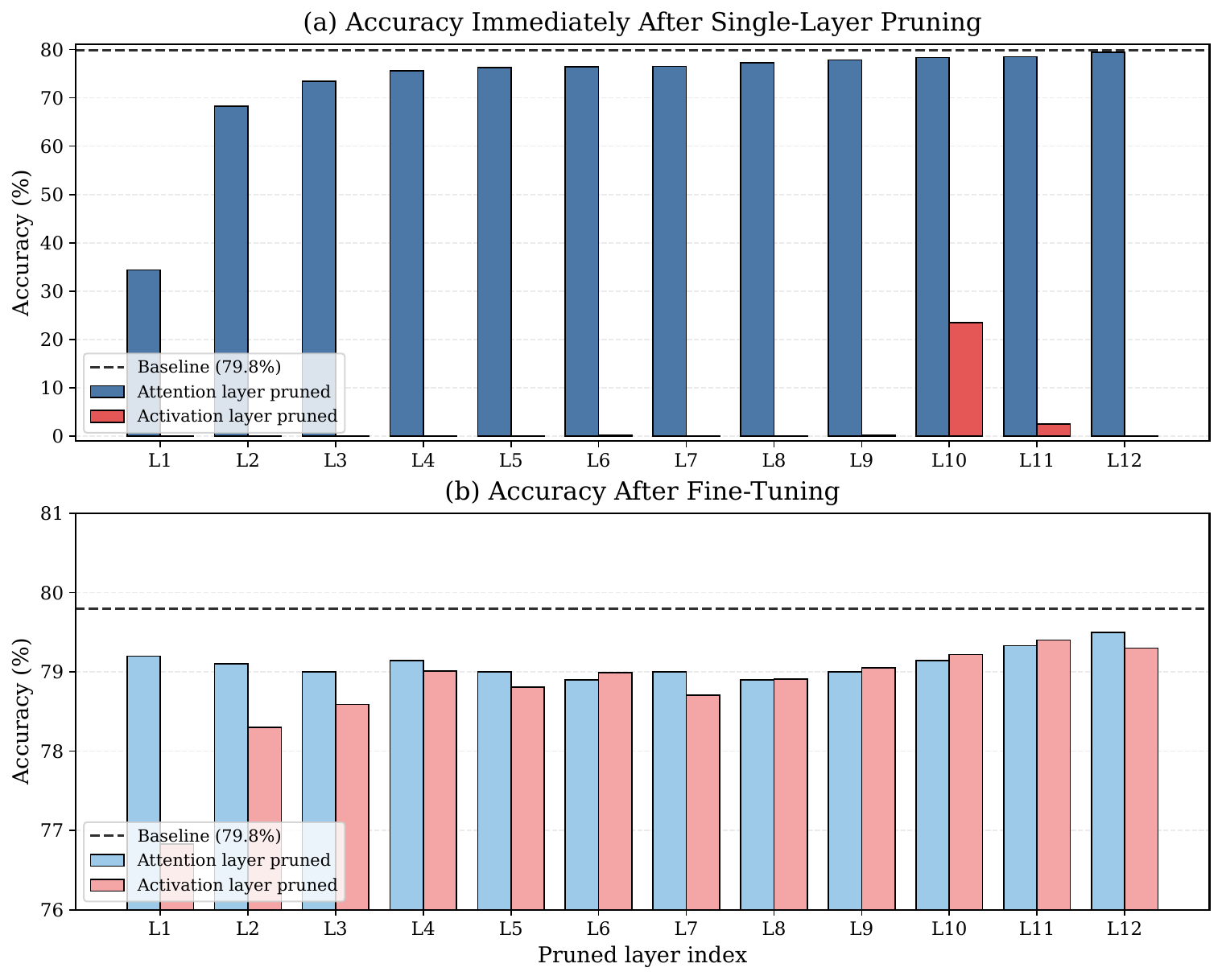}
    \caption{\hl{Recovery asymmetry in DeiT-Small under single-layer pruning. (a) Top-1 accuracy immediately after pruning an individual attention layer or activation layer at different depths. (b) Top-1 accuracy after 10 epochs of fine-tuning following the pruning of a single attention layer or activation layer at different depths. }}
    \label{fig:Recovery_Asymmetry}
\end{figure}

\section{Methods}

\begin{figure*}[h]
  \centering
  \includegraphics[width=1.0\linewidth]{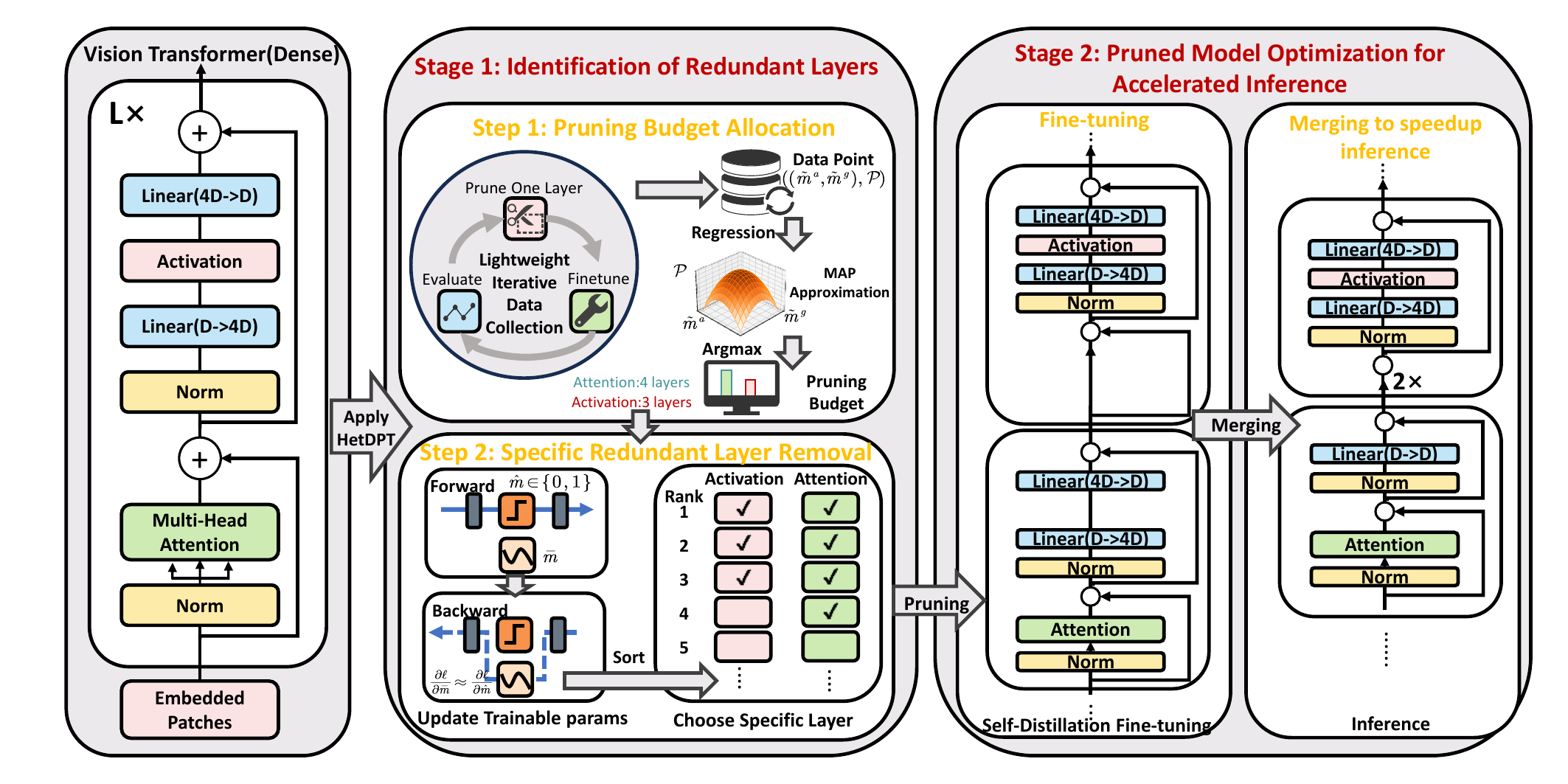}
  \caption{Overview of HetDPT: a two-stage depth compression pipeline.}
  \label{fig:pipeline}
\end{figure*}
\noindent

\textbf{Overview of HetDPT.}  
Our method prunes heterogeneous layers of attention and activation function through a two-stage pipeline: (1) identification of redundant layers, and 
(2) optimization of the pruned model via fine-tuning and layer merging to accelerate inference. 
A detailed visualization of the HetDPT pipeline is provided in Figure~\ref{fig:pipeline}.

\textbf{Key design principles from insights on heterogeneity.}  
Our method is motivated by two principles: 1) Avoid direct cross-type layer importance comparison, especially when using gradient-based metrics. 2) Layer importance should be evaluated based on the final accuracy of fine-tuned pruned ViTs, rather than the immediate accuracy after pruning.

\textbf{Alignment of HetDPT with design principles.}  
The Stage 1 of our method is further divided into two steps: pruning budget allocation (Step 1) and specific redundant layer removal (Step 2).
1) Step 1 utilizes a model accuracy predictor, to help establish the optimal quantities of attention and activation function layers to prune based on the accuracy recovered after fine-tuning, directly implementing Principle 2. 
2) Importantly, Step 1 focuses solely on determining the pruning quantities for each layer type, avoiding cross-type importance comparisons, thus adhering to Principle 1. 
3) Step 2 employs gradient-based methods only within homogeneous layer groups, i.e., either attention or activation function layers, ensuring compliance with Principle 1 throughout the pruning process.

\subsection{Problem Formulation}
Assume a ViT $\mathcal{F}$ has $L$ Transformer blocks. The $l$-th  Transformer block $f_l$ typically integrates: 1) two linear layers, i.e., $\mathcal{L}_{l,1}$ and $\mathcal{L}_{l,2}$, 2) \hl{an attention layer} $\mathcal{A}_l$, 3) a GELU activation function layer $\mathcal{G}_l$. For simplicity, we omit auxiliary components such as residual connections and layer normalization. Thus, 
\begin{equation}
    \begin{aligned}
f_{l}(\cdot ):=\mathcal{L}_{l,2}\circ \mathcal{G}_l \circ \mathcal{L} _{l,1}\circ \mathcal{A}_l \circ f_{l-1}(\cdot ),
\end{aligned}
\end{equation}
where $\circ$  denotes the function composition operation. 
As evident from Figure \ref{fig:combined_dp_latency_analysis}(b), the primary computational overhead in Transformer blocks stems from the attention layers and linear layers. We aim to reduce computational overhead through pruning both attention layers and redundant GELU activation function layers. That is,
\begin{equation}\label{main_problem}
    \begin{aligned}
        &f^{\hat{m}}_l(\cdot ):= \mathcal{L} _{l,2} \circ h\left( \mathcal{G}_l \right) \circ \mathcal{L} _{l,1} \circ h\left( \mathcal{A} _l \right) \circ f_{l-1}^{\hat{m}} (\cdot ),
        \\
        &h\left( \mathcal{G}_l \right) = \hat{m}_{l}^{g} \circ \mathcal{G}_l + \left( 1-\hat{m}_{l}^{g} \right) \circ I,
        \\
        &h\left( \mathcal{A} _l \right) = \hat{m}_{l}^{a} \circ \mathcal{A} _l + \left( 1-\hat{m}_{l}^{a} \right) \circ I,
        \\
        &\hat{m}_l^{a}, \hat{m}_l^{g} \in \{0,1\}.
        \\
        &\vM := \left( \{\hat{m}_l^a\}_{l=1}^{L}, \{\hat{m}_l^g\}_{l=1}^{L} \right)
    \end{aligned}
\end{equation}
Notably, $\hat{m}_l^{g}$ and $\hat{m}_l^{a}$ are the binary masks.
A mask's value of 1 preserves the corresponding layer, while 0 prunes it. $I$ denotes the identity mapping, i.e., a direct skip connection when the layer is pruned. 
$\vM$ is the set of all the binary masks.
A ViT with such masks is referred to as $\mathcal{F}^{\hat{m}}$. 
Thus, the heterogeneous depth pruning problem is formulated as:
\begin{equation}
    \begin{aligned}
        &
        \mathop{\mathrm{arg}\min}_{\vW,\vM} \ell \left( \mathcal{F}^{\hat{m}}(\vX, \vW, \vM), \vY \right), 
         \\
         & \qquad \mathrm{s.t.} \quad \tilde{m}^a + \tilde{m}^g = k/L,
         \\
          \tilde{m}^a = \frac{1}{L}\sum_{l=1}^L &\mathbf{1}\{\hat m_l^a=0\},\quad
\tilde{m}^g = \frac{1}{L}\sum_{l=1}^L \mathbf{1}\{\hat m_l^g=0\}.
    \end{aligned}
\end{equation}
where $(\vX, \vY)$ are respectively samples and labels of fitting data, and $\vW$ is the weights of $\mathcal{F}^{\hat{m}}$. $\ell(\cdot)$ refers to the loss function to measure data fitting quality, e.g., cross-entropy loss for classification tasks.
$k$ is the sparsity constraint that determines the total number of layers, i.e., the sum of attention layers and activation function layers, to be pruned after depth pruning.  $\tilde{m}^a$ and $\tilde{m}^g$ represent the global pruning ratios of attention layers and activation function layers, respectively. Thus the constrained feasible set $\mathcal D_k$:
\[
\mathcal D_k=\Bigl\{
(\tilde m^a,\tilde m^g)\in\mathcal D:
\tilde m^a+\tilde m^g = k/L
\Bigr\}.
\]
$\mathcal D$ is the admissible ratio set $\mathcal D=\{0,\tfrac1L,\dots,1\}^2$.

\subsection{Stage 1: Identification of Redundant Layers}
\textbf{Pruning budget allocation.} Given a sparsity constraint, the primary question is to determine how many attention layers versus activation function layers should be pruned. Naturally, the allocation scheme should maximize the accuracy of depth-pruned Transformers, i.e.,
\begin{equation}
    \begin{aligned}
       &(\tilde{m}^a, \tilde{m}^g) = \underset{\tilde{m}^a, \tilde{m}^g, \Theta}{\arg\max}\ \mathcal{P}(\tilde{m}^a, \tilde{m}^g; \Theta), 
    \end{aligned}
\end{equation}

where we call  $\mathcal{P}(\tilde{m}^a, \tilde{m}^g; \Theta)$  the \textbf{Model Accuracy Predictor (MAP)}, which predicts the model's accuracy given  $(\tilde{m}^a, \tilde{m}^g)$ . $\Theta$ represents the parameters of the MAP itself. \textbf{The discreteness of $\tilde{m}^a$ and $\tilde{m}^g$ creates a very finite solution space. Once MAP is properly characterized,  we can easily traverse all combinations of $(\tilde{m}^a, \tilde{m}^g)$ to obtain the optimal solution $(\tilde{m}^{a*}, \tilde{m}^{g*})$ maximizing MAP values. }
\textbf{Polynomial approximation of MAP.} Generally, it is intractable to obtain an exact expression for the MAP, as determining the precise expression requires a substantial amount of experimental data, and the cost of each training session is high. Here, we approximate the MAP through polynomial fitting combined with rapid data collection. Specifically, we model the MAP function as a polynomial of degree $\kappa$:
\begin{equation}
    \mathcal{P}(\tilde m^a,\tilde m^g;\Theta)
=
\sum_{i+j\le \kappa}\theta_{ij}(\tilde m^a)^i(\tilde m^g)^j,
\end{equation} 
where $\Theta = \{\theta_{ij}\}$ represents the set of learnable coefficients. By approximating the expression of the MAP using a polynomial, the MAP becomes resolvable via regression \citep{tyagi2022regression}.

\textbf{Light-weighted data collection for MAP regression.} Although MAP can be approximated using a polynomial, collecting regression data remains time-consuming due to high retraining costs for each pruned ViT configuration. To efficiently collect $((\tilde{m}^a, \tilde{m}^g), \mathcal{P})$ data, we design a lightweight data‑collection procedure: 
1) We employ \hl{an iterative} prune → fast‑finetune → evaluate cycle on a representative subset of the training data.  
Crucially, each subsequent $(\tilde{m}^a, \tilde{m}^g)$ builds incrementally upon the previous one by pruning only a single \textbf{additional} layer, which enables direct weight inheritance from the previously fine-tuned model. 
Such continuity permits rapid accuracy recovery with minimal fine-tuning (typically 10 epochs), avoiding the computational burden of training each pruned ViT from scratch. 
The resulting dataset, though compact, still maintains high fidelity to the full accuracy landscape. 
Notably, Transfer entropy (TE) \citep{10657740} is utilized as the metric to iteratively prune the model.  
2) To ensure the collected data are representative, we design two sampling algorithms, including single-type progressive pruning and interleaved pruning. For algorithmic details, see Algorithm~\ref{algo:fast} and Algorithm~\ref{algo:iterative} in Appendix~\ref{app:map-poly}.  
Further details regarding data collection and MAP polynomial approximation, including degree selection and overhead, are provided in Appendix~\ref{app:map_theory}.

\textbf{ Specific redundant layer removal.} Having obtained pruning budget allocation scheme, we employ a learning-based mechanism to identify which specific layers should be pruned within each homogeneous layer group. 
A learnable importance parameter $\bar{m}$ is assigned to each candidate layer. 
In forward pass, the binary mask $\hat{m}$ in Eq. \ref{main_problem} controls either an activation function layer or an attention layer preservation. 
In backward pass, since $\hat{m}$ is not trainable, we propagate gradients of $\hat{m}$ to the learnable $\bar{m}$. That is: 
\begin{equation}
    \begin{aligned}
        \small{\frac{\partial \ell}{\partial \bar{m}^a}}\approx \small{\frac{\partial \ell}{\partial \hat{m}^a}};
    \small{\frac{\partial \ell}{\partial \bar{m}^g}}\approx \small{\frac{\partial \ell}{\partial \hat{m}^g}}.
    \end{aligned}
\end{equation} 
We then train the model, updating the importance parameters through backpropagation.  
Layers with the lowest importance scores within their own layer type are progressively pruned until the cumulative pruned layer count matches the target pruning budget.
By avoiding cross-type importance score comparisons, this approach  \textbf{addresses the challenge from gradient disparity.}

\subsection{Stage 2: Pruned Model Optimization}
\textbf{Fine-tuning. }  After the multiple attention layers and activation function layers with the smallest $\hat{m}$ have been removed, fine-tuning is performed to restore the accuracy. 
In HetDPT, we can optionally enable self-distillation during fine-tuning, 
which means conducting knowledge distillation \citep{hinton2015distilling} under the guidance of the original ViT \hl{to boost} the pruned model’s accuracy. 
Note that we only use the original unpruned ViT as the teacher model, without introducing any extra models. 
This setup aligns with the configuration adopted in related works that also employ single-model methodologies.

\textbf{Merging to speed up inference} After fine-tuning, 
the adjacent linear layers without an activation function layer in between are merged.  
The resulting Transformer block is thus formulated as:
\begin{equation}\label{merge}
    \begin{aligned}
        f^{\hat{m}}_l(\cdot ):&= \mathcal{L} _{l,2} \circ I \circ \mathcal{L} _{l,1} \circ h\left( \mathcal{A} _l \right) \circ f^{\hat{m}}_{l-1}(\cdot )
        \\
        &= \mathcal{L} _l \circ h\left( \mathcal{A} _{\theta_l} \right) \circ f^{\hat{m}}_{l-1}(\cdot ),
    \end{aligned}
\end{equation}
where $\mathcal{L} _l$ is the newly derived linear layer that has the same number of input channels as $\mathcal{L} _{l,1}$ and the same number of output channels as $\mathcal{L} _{l,2}$. In this manner, \textbf{the depth of a ViT is further reduced with rigorous dimension alignment among layers}.
Besides, following such merging, the pruned ViT achieves significant inference speedups while its accuracy remains exactly the same as it was prior to merging.

\subsection{Theoretical Analysis of MAP}
We provide a theoretical justification for the effectiveness of MAP, summarized in three aspects (detailed proofs are given in Appendix~\ref{app:theory_summary}).

\textbf{Existence of Optimum.} Since the feasible set $\mathcal{D}_k$ is finite and the accuracy metric is bounded, a global optimizer $(\tilde{m}^{a\star}, \tilde{m}^{g\star})$ is guaranteed to exist, ensuring the search problem is well-posed.
\begin{theorem}[Existence of an optimal pruning configuration]
\label{thm:existence_opt_main}
For any pruning budget $k$,
\[
(\tilde m^{a\star},\tilde m^{g\star})
\in \underset{(\tilde m^a,\tilde m^g)\in\mathcal D_k}
{\arg\max}
\mathcal P(\tilde m^a,\tilde m^g)
\]
exists.
\end{theorem}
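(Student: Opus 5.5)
The plan is to argue directly from the finiteness of the feasible set $\mathcal D_k$. By definition $\mathcal D=\{0,\tfrac1L,\dots,1\}^2$ has exactly $(L+1)^2$ elements, so $\mathcal D_k\subseteq\mathcal D$ is finite. Once I show $\mathcal D_k$ is nonempty and $\mathcal P$ is real-valued on it, the maximum of finitely many real numbers $\{\mathcal P(\tilde m^a,\tilde m^g):(\tilde m^a,\tilde m^g)\in\mathcal D_k\}$ exists and is attained. Any point attaining it is a maximizer $(\tilde m^{a\star},\tilde m^{g\star})$, which gives the claim.

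The steps are as follows.

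\emph{Nonemptiness.} I will interpret ``any pruning budget $k$'' as any admissible integer $k$ with $0\le k\le 2L$, since the total number of prunable attention and activation layers is $2L$. For such $k$ I exhibit a feasible point explicitly. If $k\le L$, take $(\tilde m^a,\tilde m^g)=(k/L,0)$. If $L<k\le 2L$, take $(1,(k-L)/L)$. Both coordinates lie in $\{0,\tfrac1L,\dots,1\}$ and sum to $k/L$, so the point is in $\mathcal D_k$. If $k$ lies outside this range, $\mathcal D_k=\emptyset$ and the problem is vacuous. I will state this restriction explicitly as part of the hypothesis.

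\emph{Well-definedness of $\mathcal P$.} On $\mathcal D_k$, $\mathcal P$ is either the true post-fine-tuning accuracy, which lies in $[0,1]$ (or $[0,100]$), or its polynomial surrogate $\sum_{i+j\le\kappa}\theta_{ij}(\tilde m^a)^i(\tilde m^g)^j$ with fixed coefficients $\Theta$. In either case it takes a finite real value at each point. Boundedness is not even needed beyond finiteness of values, since $\mathcal D_k$ is finite.

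\emph{Existence of the maximum.} Enumerate $\mathcal D_k=\{d_1,\dots,d_n\}$ with $n\ge1$. By induction on $n$, $\max_i \mathcal P(d_i)$ exists: compare $\mathcal P(d_n)$ with the maximum over the first $n-1$ points. Choose $d_{i^\star}$ attaining it, and the argmax set is nonempty.

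The main obstacle is not analytic but definitional. The statement must make precise that $k$ ranges over $\{0,\dots,2L\}$, so that $\mathcal D_k\neq\emptyset$. It must also make precise that $\mathcal P$ is evaluated with a fixed $\Theta$ (or as the true accuracy), not jointly maximized over $\Theta$ as in the earlier formulation. A supremum over unbounded $\Theta$ could be infinite. I would resolve this by fixing $\Theta$ to the fitted regression coefficients before taking the argmax. Uniqueness is not claimed, and ties are broken arbitrarily.
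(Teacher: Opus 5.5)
Your proposal is correct and follows the paper's proof: $\mathcal D_k$ is finite, $\mathcal P$ is well-defined on it, and a maximum over a finite set is attained. Your explicit nonemptiness check for integer $0\le k\le 2L$, and your choice to fix $\Theta$ before taking the argmax, tighten points the paper leaves implicit, since its finite-maximum lemma silently assumes $\mathcal D_k\neq\emptyset$.
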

\textbf{Polynomial Approximation.} We first introduce the following assumption to enable continuous analysis.
\begin{assumption}[Continuous Relaxation of Pruning Ratios]
\label{ass:relax_main}
Although structured pruning masks $(\hat m^a,\hat m^g)$ are discrete,
their normalized pruning ratios
\[
(\tilde m^a,\tilde m^g)
\in \mathcal D=\{0,\tfrac1L,\ldots,1\}^2
\subset [0,1]^2
\]
admit a continuous relaxation in which $(\tilde m^a,\tilde m^g)$
is treated as a point in the full square $[0,1]^2$.
Furthermore, the accuracy $\mathcal P$ evaluated on $\mathcal D$
extends to a continuous map
\[
\tilde{\mathcal P}:[0,1]^2\to\mathbb [0,1],
\qquad
\tilde{\mathcal P}|_{\mathcal D}=\mathcal P.
\]
\end{assumption}

By the Stone–Weierstrass theorem\citep{de1959stone}, any continuous function on a compact domain can be uniformly approximated by polynomials. This justifies our use of a polynomial-based MAP to capture the accuracy landscape $\mathcal{P}(\tilde{m}^a, \tilde{m}^g)$.
\begin{theorem}[Uniform Polynomial Approximation of the Accuracy Surface]
\label{thm:poly_approx_main}
Under Assumption~\ref{ass:relax_main}, for every $\varepsilon>0$ there exists a polynomial
$Q(x,y)\in \mathbb R[x,y]$ such that
\[
\sup_{(x,y)\in[0,1]^2}
\bigl|\tilde{\mathcal P}(x,y)-Q(x,y)\bigr|
<\varepsilon.
\]
Consequently, the approximation error on the discrete domain is also bounded
\[
\max_{(\tilde m^a,\tilde m^g)\in\mathcal D}
\bigl|\mathcal P(\tilde m^a,\tilde m^g)-Q(\tilde m^a,\tilde m^g)\bigr|
<\varepsilon.
\]
\end{theorem}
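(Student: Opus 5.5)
The plan is to apply the Stone--Weierstrass theorem directly to $\tilde{\mathcal P}$. Assumption~\ref{ass:relax_main} makes $\tilde{\mathcal P}$ a continuous real-valued function on the compact Hausdorff space $[0,1]^2$, so what remains is to check that the algebra $\mathbb R[x,y]$, restricted to $[0,1]^2$, meets the hypotheses of the real Stone--Weierstrass theorem.

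\textbf{Checking the hypotheses.} I would verify three facts about $\mathbb R[x,y]$.
\begin{itemize}
\item It is a subalgebra of $C([0,1]^2,\mathbb R)$, since it is closed under addition, scalar multiplication and multiplication.
\item It contains the constant functions, in particular $1$.
\item It separates points: if $(x_1,y_1)\neq(x_2,y_2)$, then either $x_1\neq x_2$, in which case the coordinate polynomial $x$ separates them, or $y_1\neq y_2$, in which case $y$ does.
\end{itemize}
Stone--Weierstrass then says $\mathbb R[x,y]$ is dense in $C([0,1]^2)$ under the sup norm. Hence for every $\varepsilon>0$ there is a polynomial $Q$ with $\sup_{[0,1]^2}|\tilde{\mathcal P}-Q|<\varepsilon$, which is the first claim.

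As an alternative that avoids the abstract theorem, one could use the explicit bivariate Bernstein polynomials
\[
B_n(x,y)=\sum_{i,j=0}^n \tilde{\mathcal P}(i/n,j/n)\binom{n}{i}\binom{n}{j}x^i(1-x)^{n-i}y^j(1-y)^{n-j}.
\]
Uniform convergence $B_n\to\tilde{\mathcal P}$ would follow from uniform continuity of $\tilde{\mathcal P}$ on the compact square together with a Chebyshev bound on the binomial tails. Since the paper cites Stone--Weierstrass, I would take that route and mention Bernstein only as a constructive remark.

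\textbf{Passing to the discrete domain.} The second claim follows by restriction. We have $\mathcal D\subset[0,1]^2$ and $\tilde{\mathcal P}|_{\mathcal D}=\mathcal P$, so for every $(\tilde m^a,\tilde m^g)\in\mathcal D$,
\[
|\mathcal P(\tilde m^a,\tilde m^g)-Q(\tilde m^a,\tilde m^g)|=|\tilde{\mathcal P}(\tilde m^a,\tilde m^g)-Q(\tilde m^a,\tilde m^g)|\le\sup_{[0,1]^2}|\tilde{\mathcal P}-Q|<\varepsilon.
\]
Because $\mathcal D$ is finite, the maximum over $\mathcal D$ is attained, and it is therefore strictly less than $\varepsilon$.

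\textbf{Main obstacle.} There is no real technical obstacle, since all the substance sits in Assumption~\ref{ass:relax_main}. The one point needing care is the strictness of the final inequality. A supremum of values that are each $<\varepsilon$ could equal $\varepsilon$, but here the sup over $[0,1]^2$ is attained by compactness and continuity of $|\tilde{\mathcal P}-Q|$, and the max over $\mathcal D$ is attained by finiteness, so both inequalities stay strict.

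A remark worth adding is that the theorem does not bound the degree $\kappa$ of $Q$, nor does it say that a least-squares fit from finitely many samples recovers such a $Q$. It only justifies the polynomial ansatz for the MAP.
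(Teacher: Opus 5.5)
Your proposal is correct and follows the paper's route. You apply Stone--Weierstrass on the compact square $[0,1]^2$, using that $\mathbb R[x,y]$ contains the constants and separates points via the coordinate functions, and then restrict to $\mathcal D\subset[0,1]^2$. The paper invokes a lattice form of the theorem, so it also has to show that the uniform closure of $\mathbb R[x,y]$ is closed under $\max$ and $\min$, via polynomial approximation of $|\cdot|$; you cite the subalgebra form directly, which is slightly cleaner. Your concern about strictness is harmless but not needed, since density already gives $\max_{\mathcal D}|\mathcal P-Q|\le\sup_{[0,1]^2}|\tilde{\mathcal P}-Q|<\varepsilon$.
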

\textbf{Robustness to Fast Finetuning.}  In practice, MAP is trained on accuracy observations obtained via fast finetuning, which introduces a systematic bias $b$ and random noise $\varepsilon$. The following lemma shows that constant bias does not alter the optimal configuration.
\begin{lemma}[Bias preserves the maximizer]
\label{lem:bias_main}
For any objective function $\mathcal P(x)$ and a constant $b$, the set of maximizers remains unchanged:
\[
\underset{x}{\arg\max} \, \mathcal P(x) = \underset{x}{\arg\max} \, (\mathcal P(x) + b).
\]
\end{lemma}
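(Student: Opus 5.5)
The plan is a double inclusion of the two maximizer sets, using only that adding a constant preserves order. Let $X$ be the domain of $\mathcal P$; in our setting this is the finite feasible set $\mathcal D_k$, but the argument works for any set. Define $\mathcal Q(x) := \mathcal P(x) + b$. The key observation is that for any $x, x' \in X$,
\[
\mathcal P(x) \ge \mathcal P(x') \iff \mathcal P(x) + b \ge \mathcal P(x') + b .
\]
This holds because adding the real number $b$ to both sides of an inequality, or subtracting it, preserves the inequality. So translation by $b$ is a strictly increasing map of the real line, and it leaves the preorder that $\mathcal P$ induces on $X$ unchanged.

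\textbf{First inclusion.} Take $x^\star \in \arg\max_x \mathcal P(x)$, so $\mathcal P(x^\star) \ge \mathcal P(x)$ for all $x \in X$. By the equivalence, $\mathcal Q(x^\star) \ge \mathcal Q(x)$ for all $x$, hence $x^\star \in \arg\max_x \mathcal Q(x)$.

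\textbf{Reverse inclusion.} Apply the same argument with $-b$ in place of $b$, since $\mathcal P = \mathcal Q + (-b)$. Together the two inclusions give equality of the sets. This also covers the degenerate case: if one argmax set is empty (possible only on an infinite domain), the other must be empty too. On $\mathcal D_k$, Theorem~\ref{thm:existence_opt_main} ensures both sets are nonempty, so in our setting the maximizers coincide as a nonempty set.

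There is no real obstacle here. The only point needing care is the scope of the claim: $b$ must be a genuine constant, independent of $x$. A configuration-dependent bias $b(x)$ would break the equivalence above. I would add a remark to that effect, since the random noise term $\varepsilon$ in the fast-finetuning model is not constant and is handled separately from this lemma.
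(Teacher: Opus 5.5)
Your proof is correct and matches the paper's: the paper likewise observes that $\mathcal P(x_1)\ge\mathcal P(x_2)$ if and only if $\mathcal P(x_1)+b\ge\mathcal P(x_2)+b$ for all $x_1,x_2$, and concludes that the two sets of maximizers coincide. Your explicit double inclusion, and your remarks on the empty-argmax case and on the need for $b$ to be independent of $x$, spell out details the paper leaves implicit.
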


Consequently, MAP trained on fast-finetuned data consistently identifies the optimal pruning configuration of the ground-truth accuracy surface.

\begin{theorem}[Consistency of MAP]
\label{thm:map_consistency_main}
Let $y = \mathcal{P}(x) + b + \varepsilon$ be noisy observations of the true accuracy $\mathcal{P}(x)$, where $b$ is a constant bias and $\varepsilon$ is zero-mean noise independent of $x$. If MAP is trained via least squares, then: (i) the noise variance does not affect gradient-based optimization; (ii) the learned predictor $f^*$ satisfies $\underset{x}{\arg\max} f^*(x) = \underset{x}{\arg\max} \mathcal{P}(x)$ for the lemma~\ref{lem:bias_main}.
\end{theorem}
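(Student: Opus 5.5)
The plan is to read the statement at the population level. The least-squares risk is taken in expectation over the noise $\varepsilon$, and the hypothesis class of MAP predictors is assumed rich enough to contain $\mathcal P + b$. This is where Theorem~\ref{thm:poly_approx_main} enters: it justifies that a polynomial MAP of sufficient degree can represent this target, exactly or up to an arbitrarily small uniform error. Consider the squared-loss risk $R(f)=\mathbb E\bigl[(y-f(x))^2\bigr]$ with $y=\mathcal P(x)+b+\varepsilon$. The key step is to expand the square around the target $g(x):=\mathcal P(x)+b$:
\[
R(f)=\mathbb E\bigl[(g(x)-f(x))^2\bigr]+2\,\mathbb E\bigl[(g(x)-f(x))\varepsilon\bigr]+\mathbb E[\varepsilon^2].
\]
The cross term vanishes because $\varepsilon$ has zero mean and is independent of $x$, so $\mathbb E[(g(x)-f(x))\varepsilon]=\mathbb E[g(x)-f(x)]\,\mathbb E[\varepsilon]=0$. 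This leaves $R(f)=\mathbb E[(g-f)^2]+\sigma^2$ with $\sigma^2=\mathrm{Var}(\varepsilon)$.

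Claim (i) follows directly. The noise variance enters $R$ only as an additive constant that does not depend on $f$, or on the coefficients $\Theta$ when $f=\mathcal P(\cdot;\Theta)$. Hence $\nabla_\Theta R$ is independent of $\sigma^2$, and gradient-based optimization of the expected least-squares loss follows exactly the same trajectory it would with noiseless data $y=g(x)$. I will state this as a statement about the expected (population) gradient. Stochastic minibatch gradients do depend on $\varepsilon$ through their variance, so the claim should be read in the expected sense.

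For (ii), the decomposition shows that $R$ is minimized exactly when $\mathbb E[(g-f)^2]=0$. Taking $x$ distributed with full support on the finite feasible set $\mathcal D_k$ (every configuration is sampled with positive probability), this forces $f^*(x)=g(x)=\mathcal P(x)+b$ for every $x\in\mathcal D_k$. Lemma~\ref{lem:bias_main}, applied with the constant $b$, then gives $\arg\max_x f^*(x)=\arg\max_x(\mathcal P(x)+b)=\arg\max_x\mathcal P(x)$. Nonemptiness of both sides comes from Theorem~\ref{thm:existence_opt_main}, since $\mathcal D_k$ is finite.

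The main obstacle is the realizability assumption in (ii). If the polynomial class only approximates $g$ within some $\eta$, then $f^*$ need not have exactly the same maximizer. In that case I would fall back to a margin argument. Let $\Delta$ be the gap between the best and second-best values of $\mathcal P$ on $\mathcal D_k$. If $\max_{\mathcal D_k}|f^*-g|<\Delta/2$, which Theorem~\ref{thm:poly_approx_main} permits by taking the degree large enough, then the maximizer is preserved. I would state this explicitly as the hypothesis under which (ii) holds.
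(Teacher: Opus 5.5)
Your proof is correct and follows the paper's own proof essentially step for step. You use the same expansion of the squared risk around $\mathcal P(x)+b$, with the cross term vanishing by independence and $\mathbb E[\varepsilon]=0$, so $\sigma^2$ enters only as an additive constant invisible to $\nabla_\Theta$, and you finish with the same appeal to Lemma~\ref{lem:bias_main} once $f^*=\mathcal P+b$. Your explicit full-support and realizability hypotheses make precise what the paper leaves implicit when it writes $f^*(x)=\mathbb E[y\mid x]$ and cites Theorem~\ref{thm:poly_approx_main} only for expressivity; on a finite point set, exact polynomial interpolation at high enough degree makes realizability hold outright, so the margin fallback is not actually needed.
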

\textbf{Summary.} These results collectively establish the theoretical soundness of MAP: it searches for a guaranteed optimum (Theorem~\ref{thm:existence_opt_main}) using a provably expressive polynomial approximator (Theorem~\ref{thm:poly_approx_main}) that remains statistically consistent even when trained on noisy, low-cost signals (Theorem~\ref{thm:map_consistency_main}).

\subsection{HetDPT+}
Upon completion of the two-stage pipeline, we obtain a depth-pruned ViT that demonstrates enhanced computational efficiency while maintaining competitive accuracy. To further explore the potential of HetDPT in extreme ViT compression scenarios, we integrate the depth-pruned model with established width pruning methodologies \citep{fang2024isomorphic}, thereby proposing HetDPT+.
HetDPT+ represents a comprehensive pruning framework that addresses both depth and width dimensions of ViT architectures. 
The detailed performance is presented in Experiments.

\section{Experiments}
\label{sec:exp}
We evaluate our method on four well-established benchmarks:
(1) CIFAR-100~\citep{krizhevsky2009learning}, containing 50,000 training and 10,000 test images across 100 categories;
(2) ImageNet-1k~\citep{5206848}, a large-scale classification dataset with 1.28 million training and 50,000 validation samples spanning 1,000 classes;
(3) COCO2017~\citep{cocodataset}, a widely-used object detection benchmark with 118,000 training and 5,000 validation images annotated with 80 object categories;
(4) ADE20K~\citep{8100027}, a semantic segmentation benchmark comprising 20,000 training and 2,000 validation images with 150 semantic categories.
Inference profiling (MACs/throughput) is conducted on an H800 PCIE GPU. More experimental details are provided in our Appendix~\ref{app:exp_setup}.

\begin{figure}[htbp]
    \centering
    \includegraphics[width=1.0\linewidth]{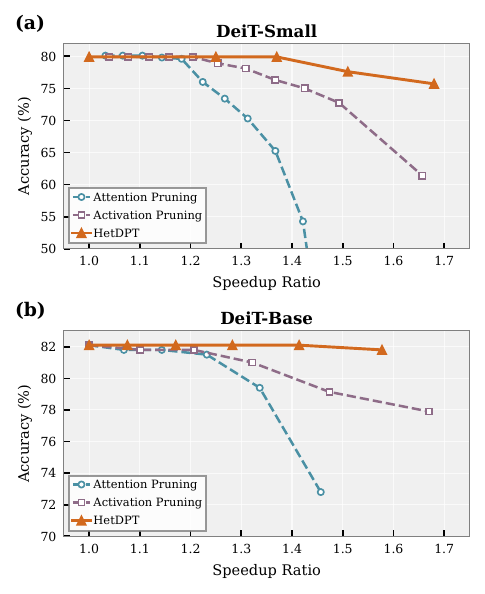}
    \caption{\hl{Comparison of HetDPT and single-layer-type pruning on ImageNet-1K for DeiT-S and DeiT-B. (a) Top-1 accuracy vs. speedup on DeiT-S. (b) Top-1 accuracy vs. speedup on DeiT-B.}}
    \label{fig:tradeoff}
\end{figure}

\subsection{Main Results}

\textbf{Accuracy-speedup tradeoff analysis.} 
We compare HetDPT against isolated pruning of attention layers and activation layers using NOSE~\citep{10657740}. As shown in Figure~\ref{fig:tradeoff}, HetDPT consistently achieves superior accuracy at equivalent speedup ratios. For DeiT-S, HetDPT attains 1.39× speedup with 79.6\% accuracy, substantially outperforming attention-only pruning (70.3\%) and activation-only pruning (76.3\%). For DeiT-B, HetDPT achieves 1.58× speedup while fully preserving baseline accuracy (81.8\%), which highlights the benefits of considering heterogeneity in our method.

\begin{table}[ht]
    \centering
    \caption{Comparison with depth pruning, depth-width (DW) pruning on ImageNet-1k.}
    \label{tab:comparison_deit_swin}
    \scriptsize
    \setlength{\tabcolsep}{1mm}
    \begin{tabular}{lccccc}
    \toprule
    \multirow{2}{*}{\textbf{Method}} & \textbf{Top-1} & \textbf{MACs} & \textbf{Throughput} & \textbf{Params} & \textbf{Speedup} \\
    & \textbf{(\%)} & \textbf{(G)} & \textbf{(img/s)} & \textbf{(M)} & \textbf{(×)} \\
    \midrule
    DeiT-B (Dense) & 81.8 & 17.6 & 741.5 & 86.6 & 1.00 \\
    \midrule
    DW Prun. (0.2-12) & 81.9 & 13.6 & 907.6 & 68.2 & 1.22 \\
    DW Prun. (0.2-10) & 80.9 & 10.8 & 1037.9 & 59.4 & 1.40 \\
    NOSE (5 layers) & 81.8 & 14.6 & 894.7 & 75.5 & 1.21 \\
    NOSE (6 layers) & 81.5 & 14.1 & 921.4 & 73.2 & 1.24 \\
    \midrule
    \rowcolor{gray!20}
    \textbf{HetDPT (8 layers)} & \textbf{82.1} & 11.8 & 1048.5 & 61.4 & 1.41 \\
    \rowcolor{gray!20}
    \textbf{HetDPT (10 layers)} & 81.8 & \textbf{10.5} & \textbf{1169.7} & \textbf{54.9} & \textbf{1.58} \\
    \midrule
    \midrule
    DeiT-S (Dense) & 79.8 & 4.6 & 2349.9 & 22.1 & 1.00 \\
    \midrule
    DW Prun. (0.3-12) & 78.6 & 3.1 & 2989.8 & \textbf{15.0} & 1.27 \\
    NOSE (5 layers) & 79.6 & 3.7 & 2875.3 & 19.5 & 1.22 \\
    \midrule
    \rowcolor{gray!20}
    \textbf{HetDPT (6 layers)} & \textbf{80.1} & 3.3 & 2987.3 & 17.6 & 1.27 \\
    \rowcolor{gray!20}
    \textbf{HetDPT (8 layers)} & 79.7 & \textbf{3.0} & \textbf{3275.0} & 15.9 & \textbf{1.39} \\
    \midrule
    \midrule
    Swin-S (Dense) & 83.2 & 8.7 & 1106.9 & 49.6 & 1.00 \\
    \midrule
    DW Prun. (0.2-24) & 82.2 & 6.8 & 1140.1 & 39.2 & 1.10 \\
    DW Prun. (0.2-22) & 81.8 & 6.3 & 1273.0 & 34.4 & 1.15 \\
    \midrule
    \rowcolor{gray!20}
    \textbf{HetDPT (10 layers)} & \textbf{82.4} & 7.3 & \textbf{1365.9} & 43.9 & \textbf{1.23} \\
    \midrule
    \midrule
    Swin-B (Dense) & 83.5 & 15.4 & 	690.25 & 87.8 & 1.00 \\
    \midrule
    SAViT & 82.6 & 7.7 & 	1056.08 & 33.0 & 1.53 \\
    \midrule
    \rowcolor{gray!20}
    \textbf{HetDPT } & \textbf{82.9} & 9.8 & \textbf{1106.95} & 55.3 & \textbf{1.60} \\
    \bottomrule
    \end{tabular}
\end{table}

\textbf{Comparison with depth pruning and depth-width pruning on ImageNet-1k.} 
As shown in Table~\ref{tab:comparison_deit_swin}, HetDPT consistently outperforms existing depth pruning and depth-width (DW) pruning methods across all architectures. On DeiT-B, HetDPT achieves 82.1\% Top-1 accuracy with only 11.8G MACs, surpassing the baseline (81.8\%@17.6G) while reducing computation by 33\%. With 10 layers pruned, HetDPT matches the baseline accuracy while achieving 1.58× speedup—substantially outperforming NOSE, which prunes fewer layers yet yields inferior efficiency. Similar improvements are observed on DeiT-S, where HetDPT with 8 layers pruned achieves 1.39× speedup with only 0.1\% accuracy drop. On Swin Transformers, despite their more parameter-efficient hierarchical design, HetDPT still delivers consistent gains, achieving 1.23× speedup on Swin-S and 1.6× on Swin-B with minimal accuracy degradation.

\begin{table}[ht]
    \centering
    \small
    \caption{Comparison with SOTA ViT compression methods on ImageNet-1k.}
    \label{tab:comparison_sota_fang}
    \setlength{\tabcolsep}{0.8mm}
    \begin{tabular}{lccccc}
    \toprule
    \multirow{2}{*}{\textbf{Method}} & \textbf{Top-1} & \textbf{MACs} & \textbf{Throughput} & \textbf{Params} & \textbf{Speedup} \\
    & \textbf{(\%)} & \textbf{(G)} & \textbf{(img/s)} & \textbf{(M)} & \textbf{(×)} \\
    \midrule
    DeiT-B-KD & 83.3 & 17.7 & 741.5 & 87.3 & 1.00 \\ 
    \midrule
    NViT-S & 82.2 & 3.9 & 2394.5 & 20.8 & 3.23 \\
    Iso-Prun. 4.2G & 82.4 & 4.2 & 2516.0 & 20.7 & 3.39 \\
    \rowcolor{gray!20}
    \textbf{HetDPT+ 3.7G} & \textbf{82.5} & \textbf{3.7} & \textbf{2763.9} & \textbf{20.0} & \textbf{3.73} \\ 
    \midrule
    Iso-Prun. 2.6G & 81.1 & 2.6 & 3145.8 & 13.1 & 4.24 \\
    \rowcolor{gray!20}
    \textbf{HetDPT+ 2.4G} & \textbf{81.4} & \textbf{2.4} & \textbf{3845.6} & 13.2 & \textbf{5.19} \\ 
    \midrule
    NViT-T & 76.2 & 1.2 & 6013.7 & 6.7 & 8.11 \\
    Iso-Prun. 1.2G & 77.5 & 1.2 & 6387.4 & \textbf{5.7} & 8.61 \\
    \rowcolor{gray!20}
    \textbf{HetDPT+ 1.1G} & 77.4 & \textbf{1.1} & \textbf{6814.0} & 6.2 & \textbf{9.19} \\
    \bottomrule
    \end{tabular}
\end{table}

\textbf{Comparison with SOTA ViT compression methods on ImageNet-1k.}
As shown in Table~\ref{tab:comparison_sota_fang}, we compare HetDPT+ against state-of-the-art extreme compression methods. At comparable accuracy levels, HetDPT+ consistently achieves higher speedup than Isomorphic Pruning: 3.73× vs. 3.39× at $\sim$82.5\% accuracy, and 5.19× vs. 4.24× at $\sim$81\% accuracy. For ultra-extreme compression, HetDPT+ achieves 77.4\% accuracy with only 1.1G MACs and 9.19× speedup, demonstrating that our depth pruning approach enables existing compression techniques to reach new state-of-the-art performance.

\textbf{Transfer learning on CIFAR.} 
As illustrated in Tab.~\ref{tab:comparison_cifar100}, HetDPT exhibits transfer learning performance equivalent to or superior to related works. This indicates that the architecture-induced invariance learned by HetDPT on ImageNet remains robust to domain shifts, and does not suffer from catastrophic forgetting during fine-tuning.

\begin{table}[h]
    \centering
    \small
    \caption{Transfer learning results (CIFAR-100).}
    \label{tab:comparison_cifar100}
    \setlength{\tabcolsep}{2mm}
    \begin{tabular}{lcc}
        \toprule
        Method & Fine-tuning (\%) & Linear probing (\%)  \\
        \midrule
        DeiT-B (Dense) & 90.5 & 80.6 \\
        \midrule
        Evo-ViT & 90.1 & 79.1 \\   
        EViT & 90.0 & 80.2 \\   
        TPS & 90.1 & 76.5 \\   
        NOSE (5 layers) & 90.3 & 81.3 \\   
        NOSE (6 layers) & 90.2 & 80.6 \\ 
        \midrule
        \rowcolor{gray!20}
        \textbf{Ours (8 layers)} & \textbf{90.4} & \textbf{81.2} \\   
        \rowcolor{gray!20}
        \textbf{Ours (10 layers)} & 90.2 & 80.6 \\ 
        \bottomrule
    \end{tabular}
\end{table}
\begin{table}[h]
    \centering
    \small
    \caption{Results on COCO2017 (12-epoch training).}
    \label{tab:comparison_coco}
    \setlength{\tabcolsep}{2mm}
    \begin{tabular}{lccc}
        \toprule
        Backbone & mAP (\%) & AP50 (\%) & AP75 (\%) \\
        \midrule
        DeiT-B (Dense) & 32.8 & 50.0 & 34.4 \\
        NOSE (6 layers) & 32.0 & 48.7 & 33.6 \\
        \midrule
        \rowcolor{gray!20}
        \textbf{Ours (10 layers)} & \textbf{32.4} & \textbf{49.6} & \textbf{33.9} \\
        \bottomrule
    \end{tabular}
\end{table}

\begin{table}[h]
    \centering
    \small
    \caption{Results on ADE20k.}
    \label{tab:comparison_seg}
    \setlength{\tabcolsep}{2mm}
    \begin{tabular}{lccc}
        \toprule
        Method & mIoU (\%) & mAcc (\%) & aAcc (\%) \\
        \midrule
        DeiT-B (Dense) & 47.0 & 57.5 & 82.6 \\   
        \midrule
        EViT & 45.5 & 55.9 & 81.9 \\   
        TPS & 45.3 & 55.1 & 81.9 \\   
        NOSE (5 layers) & 46.2 & 56.5 & 82.2 \\   
        NOSE (6 layers) & 45.6 & 55.2 & 82.0 \\  
        \midrule
        \rowcolor{gray!20}
        \textbf{Ours (8 layers)} & \textbf{46.4} & \textbf{56.5} & \textbf{82.2} \\
        \rowcolor{gray!20}
        \textbf{Ours (10 layers)} & 45.4 & 55.7 & 81.9 \\
        \bottomrule
    \end{tabular}
\end{table}

\textbf{Result on COCO object detection.} 
To further validate the transferability of HetDPT on dense prediction tasks, we evaluate our method on COCO2017 object detection using the DETR framework. As shown in Table~\ref{tab:comparison_coco}, HetDPT with 10 layers pruned achieves 32.4\% mAP, with only a 0.4\% drop from the DeiT-B baseline performance. Compared to NOSE with only 6 layers pruned, HetDPT removes more layers yet still achieves superior detection performance.

\begin{table}[ht]
    \centering
    \scriptsize
    \caption{Results of DINO-V2-Giant on CIFAR-100}
    \label{tab:comparison_dino}
    \setlength{\tabcolsep}{1mm}
    \begin{tabular}{lcccc}
    \toprule
    \multirow{2}{*}{\textbf{Method}} & \textbf{Top-1} & \textbf{Throughput} & \textbf{Params} & \textbf{Speedup} \\
    & \textbf{(\%)} & \textbf{(img/s)} & \textbf{(M)} & \textbf{(×)} \\
    \midrule
    DINO-V2-Giant (Dense) & 94.91 & 45.16 & 1134.92 & 1.00 \\
    \midrule
    NOSE (12 layers) & 94.53 & 53.03 & 1021.55 & 1.17 \\
    \midrule
    \rowcolor{gray!20}
    \textbf{HetDPT (15 layers)} & \textbf{94.85} & \textbf{56.87} & \textbf{927.12} & \textbf{1.26} \\
    \bottomrule
    \end{tabular}
\end{table}

\textbf{Result on ADE20k.} We also extend the proposed method to the dense prediction task on ADE20k. As shown in Table\ref{tab:comparison_seg}, when pre-trained on ImageNet-1k, our model with 8 layers removed exhibits a minimal gap of only approximately 0.6\% compared to the baseline.  
Also, HetDPT consistently outperforms other approaches.  

\textbf{Scalability to DINO-V2-Giant.}
\hl{To validate the scalability of our method to billion-parameter models, we conduct experiments with DINO-V2-Giant on CIFAR-100}, as shown in Table~\ref{tab:comparison_dino}. 
HetDPT prunes 15 layers and attains 94.85\% Top-1 accuracy, with only a 0.06\% drop from the baseline. 
Despite removing more layers than NOSE (15 vs. 12), HetDPT achieves superior accuracy alongside greater parameter reduction and a 1.26× speedup, confirming the scalability of our approach to billion-parameter models. In terms of per-image latency, DINOv2-Giant requires 22.1 ms/image, NOSE requires 18.9 ms/image, and HetDPT requires 17.6 ms/image under the same profiling setting.

\textbf{Excellent orthogonality to token pruning methods} HetDPT exhibits remarkable compatibility with token pruning, as shown in Figure~\ref{fig:gtp} in  Appendix~\ref{app:exp}.
When combined with GTP-ViT ($num_{prop}=20$), HetDPT further achieves a 1.24× speedup compared to HetDPT alone, with only a 0.33\% accuracy drop (81.47\% vs. 81.8\%).
Meanwhile, we observe that under equivalent accuracy drops, the speedup compared to dense model remain consistently stable, maintaining between 1.5-1.6×, and are even slightly higher compared to methods without token pruning. This demonstrates that HetDPT does not introduce the performance degradation typically associated with token pruning methods.

\begin{table}
    \centering
    \small
    \setlength{\tabcolsep}{1mm}
    \caption{Ablation analysis of Stage 1 components.
    }
    \label{tab:ab}
    \begin{tabular}{l|cccl}
        \toprule
         Models  & TE & Stage1-Step1& Stage1-Step2 & Acc \\
         \hline
        \multirow{4}{*}{DeiT-S}& \ding{52}  & \ding{52} & \ding{52}  & \textbf{79.7} \\
         &  \ding{52}  & \ding{52} &  & 79.4  \\
        &   & & \ding{52} & 78.5  \\
         & \ding{52} & &  & 78.5  \\
        \hline
        \multirow{4}{*}{DeiT-B}& \ding{52}  & \ding{52} & \ding{52}  & \textbf{81.8} \\
         &  \ding{52}  & \ding{52} &  & 80.5  \\
        &   & &  \ding{52}& 78.1  \\
         & \ding{52}& &  & 78.9  \\
        \hline
    \end{tabular}
\end{table}

\subsection{Ablation Study}
We evaluate the contributions of Stage 1's components on DeiT-S (8 layers pruned) and DeiT-B (10 layers pruned).
For clarity, we designate pruning budget allocation as "Stage1-Step1", and specific redundant removal as "Stage1-Step2".

\noindent
\textbf{Ablation results.} As shown in Table \ref{tab:ab}, removing Stage1 Step2 causes significant performance drops: DeiT-B from 81.8\% to 80.5\% and DeiT-S from 79.7\% to 79.4\%. Using only TE (directly using TE metric for layer selection) leads to severe degradation: DeiT-B to 78.9\% and DeiT-S to 78.5\%. Using only Stage1-Step 2 (directly using trainable importance scores for layer selection) performs even worse: DeiT-B to 78.1\% and DeiT-S to 78.5\%. 

These results confirm that all components are essential: TE and Stage1-Step 1 tackle the challenge incurred by recovery asymmetry through separate layer-type statistics. Together, Stage1-Step1 and Stage1-Step2 mitigate the challenge incurred by gradient disparity via eliminating cross-type comparisons, and account for modeling layer interdependencies.

\section{\hl{Related Work}}

\textbf{ Structured ViT pruning  } Depending on the pruning targets, existing structured pruning methods for ViTs can be divided into four categories:
1) \emph{width pruning}, such as NViT \citep{Yang_2023_CVPR}, IsomorphicPruning \citep{fang2024isomorphic}, and related works \citep{Chavan_2022_CVPR,zhu2021visiontransformerpruning,yu2022unified,NEURIPS2022_3b11c5cc};
2) \emph{depth pruning}, such as NOSE \citep{10657740};
3) \emph{joint width-depth pruning}, such as WD-Pruning(DW Prun.) \citep{Yu_Huang_Wang_Cheng_Chu_Cui_2022}; and
4) \emph{token pruning/merging}, such as ToMe \citep{bolya2022tome}, GTP-ViT \citep{GTPVIT}, DynamicViT \citep{rao2021dynamicvit}, Evo-ViT \citep{Xu_Zhang_Zhang_Sheng_Li_Dong_Zhang_Xu_Sun_2022}, EViT \citep{liang2022patchesneedexpeditingvision}, TPS \citep{wei2023joint}, and related works \citep{chen2023diffrate,PPT}. Among the first three categories, width pruning often yields limited speedup, depth pruning provides higher acceleration but is harder to recover, and joint width-depth pruning must balance both trade-offs.

\textbf{Other pruning techniques for ViTs.}
Besides structured pruning, other Transformer compression techniques include unstructured pruning \citep{chen2021chasing,frantar2023sparsegptmassivelanguagemodels,sun2024simpleeffectivepruningapproach} and semi-structured pruning \citep{zhao202424exploringvnmsparsity,xu2024lpvitlowpowersemistructuredpruning,lu2023steplearningnmstructured,liu2025tb}. However, these methods usually require specialized hardware support. Another related line is activation pruning \citep{chen2023sparsevit,ganguli2024activation}, which dynamically removes neurons or feature maps during inference. In contrast, our work statically removes activation-function layers rather than dynamically pruning activations.

\textbf{Related studies beyond ViTs.}
Several works beyond ViTs are related to our motivation. DepthShrinker \citep{fu2022depthshrinker} studies activation removal and linear-layer merging in CNNs, while recent LLM works \citep{he2024matters,gromov2024unreasonable,men2025shortgpt} reveal heterogeneous or layer-wise redundancy mainly through analysis or coarse-grained layer dropping. In contrast, we study finer-grained structured depth pruning for ViTs, jointly modeling attention and activation layers under a unified budget allocation strategy, and analyze their distinct pruning and recovery behaviors.

\textbf{Difference from DepthShrinker.}
DepthShrinker also removes nonlinear activation functions and merges adjacent linear/convolutional operators, but it is designed for CNN compression where the main pruning candidates are homogeneous feedforward operators. Directly transferring this idea to ViTs does not address the heterogeneous interaction between attention layers and FFN activation layers. In contrast, HetDPT formulates ViT depth pruning as a mixed heterogeneous allocation problem, where attention and activation layers have different gradient scales and recovery dynamics. Our MAP-based budget allocation and within-type layer selection are specifically designed for this setting. Empirically, directly applying a DepthShrinker-style strategy to ViTs leads to substantially lower accuracy than HetDPT.

\section{\hl{Conclusion}}
We propose HetDPT, a two-stage framework for ViT depth pruning. It addresses gradient disparity and recovery asymmetry via MAP-based budget allocation and separate importance estimation for attention and activation layers. Experiments show that HetDPT achieves state-of-the-art depth pruning performance, while HetDPT+ further improves results in extreme compression settings. A limitation is that MAP is refitted for each architecture family in our current pipeline, and extending HetDPT to language models or other modalities remains future work.

\section*{Acknowledgements}

This work was supported in part by Huawei Technologies Co., Ltd. 
We thank the anonymous reviewers for their constructive comments and suggestions, which helped improve the quality and clarity of this paper.

We also thank the members of the research teams at the School of Data Science, The Chinese University of Hong Kong, Shenzhen, Tsinghua University, and Renmin University of China for valuable discussions and feedback during the development of this work.

The experiments were conducted using computing resources provided by Huawei Technologies Co., Ltd.
\section*{Impact Statement}
This work presents HetDPT, a heterogeneity-aware depth pruning method for Vision Transformers that achieves significant computational efficiency improvements while maintaining model accuracy. We discuss the potential broader impacts of this research below.

\textbf{Positive societal impacts.} 
The primary contribution of this work is enabling more efficient deployment of Vision Transformers, which carries several beneficial implications. First, by substantially reducing computational requirements (up to 9.19× speedup), our method can democratize access to state-of-the-art vision models for researchers and practitioners with limited computational resources. Second, the reduced energy consumption associated with compressed models contributes to more environmentally sustainable AI development—a critical consideration given the growing carbon footprint of large-scale deep learning. Third, efficient ViT models enable real-time deployment on edge devices, facilitating applications in healthcare diagnostics, accessibility tools, and autonomous systems where low latency is essential.

\textbf{Potential risks and limitations.}
As with any model compression technique, there exists a risk that pruned models may exhibit subtle performance degradation on underrepresented data distributions not captured during evaluation. While our extensive experiments across multiple benchmarks (ImageNet-1k, CIFAR-100, COCO, ADE20K) demonstrate robust generalization, practitioners should conduct thorough validation when deploying compressed models in safety-critical applications. Additionally, the efficiency gains from our method could lower barriers to deploying vision models in surveillance or other privacy-sensitive contexts, which warrants careful consideration of deployment guidelines and regulatory frameworks.

\textbf{Final thoughts.}
We believe the benefits of efficient vision model deployment—including reduced computational costs, broader accessibility, and environmental sustainability—outweigh the potential risks, particularly when accompanied by responsible deployment practices. Our work does not introduce new capabilities for harmful applications beyond what existing ViT models already enable; rather, it makes existing capabilities more accessible and sustainable.
\bibliography{references}
\bibliographystyle{arxiv}

\newpage
\appendix
\onecolumn

\section{Experiment Setup}
\label{app:exp_setup}
\textbf{Datasets and Benchmarks.} We evaluate our method on four established benchmarks: 
(1) CIFAR-100~\citep{krizhevsky2009learning}, containing 50,000 training and 10,000 validation images across 100 categories; 
(2) ImageNet-1k~\citep{5206848}, a large-scale classification dataset with 1.28 million training and 50,000 validation samples spanning 1,000 classes; 
(3) COCO2017~\citep{cocodataset}, a widely-used object detection benchmark with 118,000 training and 5,000 validation images annotated with 80 object categories;
(4) ADE20K~\citep{8100027}, a semantic segmentation benchmark comprising 20,000 training and 2,000 validation images with 150 semantic categories.

\textbf{Evaluation Protocol.} Our evaluation framework consists of four components: 
Primary validation on ImageNet-1k for classification accuracy, cross-task verification on COCO2017 for object detection, cross-task verification using ADE20K for dense prediction capability, and transfer learning analysis on CIFAR-100 to assess feature transferability. All experiments follow standardized evaluation metrics for their respective tasks. 

\noindent
\textbf{Training Configuration.} ImageNet-1k training employs 8 NVIDIA A100 GPUs with 128 samples per device, using AdamW optimizer ($\beta_1=0.9$, $\beta_2=0.999$) and a cosine-decayed learning rate from $5\times10^{-5}$ to $1\times10^{-5}$. 
CIFAR-100 experiments utilize SGD optimizer with fixed learning rate 0.1 and 384 batch size per GPU (224$\times$224 resolution). For COCO2017 detection, we use H-Deformable DETR with hybrid branch, training 12 epochs on 8 A100 GPUs (batch size 1/GPU). We employ AdamW optimizer with weight decay 0.05, learning rate $1\times10^{-4}$ ($5\times10^{-5}$ for backbone). The model uses two-stage detection with 300 one-to-one and 1500 one-to-many queries ($k=6$, $\lambda=1.0$), along with box refinement, mixed selection, and look-forward-twice strategy.
ADE20K segmentation adopts polynomial learning rate decay (power=1.0) from initial 0.01 using SGD across 8 GPUs (2 images/device).

\noindent
\textbf{Evaluation Details.} To ensure reproducibility and fair comparison, all baseline models are instantiated using the official implementation scripts and pre-trained weights provided by their respective authors. Computational complexity analysis, including FLOPs and parameter count estimation, is conducted using the standardized measurement APIs from the Torch Pruning library\citep{fang2024isomorphic}. Inference throughput evaluation follows a systematic protocol: each model undergoes 10 warm-up iterations to stabilize GPU memory allocation and kernel optimization, followed by 100 consecutive inference runs. Throughput is computed as the average inference time over the latter 100 iterations to mitigate the impact of initialization overhead and ensure statistical reliability. All experiments maintain uniform input specifications with RGB images of resolution 224×224 and a consistent batch size of 256 across all evaluated architectures.

\noindent
\textbf{Hardware Platform and Training Time} 
All training phases are performed on NVIDIA A100 Tensor Core GPUs , while inference profiling (MACs/throughput) is conducted on an NVIDIA H800 Tensor Core GPU (PCIe) unless otherwise specified. 
Overall, for Deit-Base, the pipeline requires approximately 1.5 hours for pruning–ratio determination, 0.5 hours for layer identification on GPUs, and 28 hours for final fine-tuning on 8 GPUs, amounting to roughly 30 hours of total computation.

\section{More Experiments}
\label{app:exp}
\subsection{Main Results}

\textbf{Accuracy-Speedup Tradeoff Analysis.} We conducted a comprehensive evaluation of HetDPT's heterogeneous pruning efficacy using DeiT-S and DeiT-B architectures. To establish baseline comparisons, we implemented isolated pruning of self-attention layers and activation layers respectively using the NOSE algorithm, followed by 400-epoch retraining for accuracy recovery  (consistent with the NOSE setting). Subsequent application of HetDPT's unified pruning framework under identical experimental conditions revealed statistically significant improvements: as shown in Figure~\ref{fig:tradeoff}, HetDPT achieves higher model accuracy at equivalent speedup ratios compared to single-layer-type pruning baselines.

For DeiT-S, HetDPT enables a 1.39$\times$ speedup ratio with near-lossless accuracy retention (79.6\% vs. baseline 79.9\%), outperforming both activation-layer-only pruning (76.3\%) and self-attention-only pruning (70.3\%). For DeiT-B, HetDPT achieves an enhanced 1.58$\times$ acceleration while maintaining identical baseline accuracy (81.8\% vs. baseline 81.8\%), significantly surpassing activation-layer pruning (77.9\%) and self-attention pruning (72.8\%).

These empirical results substantiate key advantage of our heterogeneous optimization approach: through synergistic layer interaction, our method achieves superior speedup ratios compared to individual layer-type pruning strategies, demonstrating enhanced capability in eliminating architectural redundancies within Vision Transformers.


\begin{table}[ht]
    \centering
    \caption{Comparison with depth pruning, depth-width pruning, and token pruning methods on ImageNet-1k.}
    \label{tab:comparison_deit}
    \small
    \setlength{\tabcolsep}{1mm}
    \begin{tabular}{lccccc}
    \toprule
    \textbf{Method} & \textbf{Top-1 (\%)} & \textbf{MACs (G)} & \textbf{Throughput (images/s)} & \textbf{Params (M)} & \textbf{Speedup (×)} \\
    \midrule
    DeiT-B (Dense) & 81.8 & 17.6 & 741.5 & 86.6 & 1.00 \\
    \midrule
    DynamicViT & 81.3 & 11.2 & 1151.5 & 89.5 & 1.55 \\
    Evo-ViT & 81.3 & 11.3 & 1147.0 & 87.3 & 1.55 \\
    EViT & 81.3 & 11.2 & 1138.1 & 86.6 & 1.53 \\
    TPS & 81.4 & 12.9 & 993.9 & 89.5 & 1.34 \\
    ToMe & 80.6 & 13.2 & 949.2 & 86.6 & 1.28 \\
    GTP-ViT & 81.5 & 13.2 & 955.3 & 86.6 & 1.29 \\
    WD-Pruning (0.2-12) & 81.9 & 13.6 & 907.6 & 68.2 & 1.22 \\
    WD-Pruning (0.2-10) & 80.9 & 10.8 & 1037.9 & 59.4 & 1.40 \\
    NOSE (5 layers) & 81.8 & 14.6 & 894.7 & 75.5 & 1.21 \\
    NOSE (6 layers) & 81.5 & 14.1 & 921.4 & 73.2 & 1.24 \\
    \midrule
    \rowcolor{gray!20}
    \textbf{HetDPT (8 layers)} & \textbf{82.1} & 11.8 & 1048.5 & 61.4 & 1.41 \\
    \rowcolor{gray!20}
    \textbf{HetDPT (10 layers)} & 81.8 & \textbf{10.5} & \textbf{1169.7} & \textbf{54.9} & \textbf{1.58} \\
    \midrule
    \midrule
    DeiT-S (Dense) & 79.8 & 4.6 & 2349.9 & 22.1 & 1.00 \\
    \midrule
    TPS & 79.7 & 3.3 & 3051.2 & 22.8 & 1.30 \\
    ToMe & 79.5 & 3.3 & 2874.5 & 22.1 & 1.22 \\
    GTP-ViT & 79.5 & 3.5 & 2793.4 & 22.1 & 1.19 \\
    WD-Pruning (0.3-12) & 78.6 & 3.1 & 2989.8 & \textbf{15.0} & 1.27 \\
    NOSE (4 layers) & 79.8 & 3.8 & 2774.8 & 20.1 & 1.18 \\
    NOSE (5 layers) & 79.6 & 3.7 & 2875.3 & 19.5 & 1.22 \\
    \midrule
    \rowcolor{gray!20}
    \textbf{HetDPT (6 layers)} & \textbf{80.1} & 3.3 & 2987.3 & 17.6 & 1.27 \\
    \rowcolor{gray!20}
    \textbf{HetDPT (8 layers)} & 79.7 & \textbf{3.0} & \textbf{3275.0} & 15.9 & \textbf{1.39} \\
    \midrule
    \midrule
    SwinTransformer-S (Dense) & 83.2 & 8.7 & 1106.93 & 49.6 & 1.00 \\
    \midrule
    WD-Pruning (0.2-24) & 82.2 & 6.8 & 1140.14 & 39.2 & 1.10 \\
    WD-Pruning (0.2-22) & 81.8 & 6.3 & 1272.97 & 34.4 & 1.15 \\
    \midrule
    \rowcolor{gray!20}
    \textbf{HetDPT (10 layers)} & \textbf{82.4} & 7.3 & \textbf{1365.89} & 43.9 & \textbf{1.23} \\
    \midrule
    \midrule
    SwinTransformer-B (Dense) & 83.5 & 15.4 & 	690.25 & 87.8 & 1.00  \\
    \midrule
    SAViT & 82.6 & 7.7 & 	1056.08 & 33.0 & 1.53 \\
    \midrule
    \rowcolor{gray!20}
    \textbf{HetDPT } & \textbf{82.9} & 9.8 & \textbf{1106.95} & 55.3 & \textbf{1.60} \\
    \bottomrule
    \end{tabular}
\end{table}
\textbf{Comparison HetDPT with depth pruning, depth-width pruning, and token pruning methods on ImageNet-1k.}To demonstrate the effectiveness of our HetDPT method, we conduct extensive comparisons with existing pruning strategies, including depth pruning, depth-width pruning, and token pruning methods. Since works focusing solely on depth pruning are limited, token pruning baselines are also incorporated for a more comprehensive evaluation. As shown in Table~\ref{tab:comparison_deit}, HetDPT consistently outperforms existing methods across multiple backbones.

For DeiT-B, HetDPT achieves 82.1\% Top-1 accuracy with 11.8G MACs, surpassing the dense baseline accuracy (81.8\%@17.6G) while reducing computation by 30.1\%. Notably, when pruning 10 layers, our method maintains baseline accuracy (81.8\%) but dramatically improves efficiency, with only 54.9M parameters, 10.5G MACs, and 1169.7 images/s throughput, representing a 1.58× speedup. These results reveal that HetDPT is able to remove redundant computations while preserving essential semantic representations, a clear indication of effective structural pruning.

For DeiT-S, HetDPT achieves nearly lossless compression, reaching 80.1\% accuracy with 6 layers or maintaining 79.7\% accuracy with 8 layers pruned. In the latter case, it achieves superior parameter efficiency (15.9M vs. 22.1M), reduced computational complexity (3.0G vs. 4.6G), and higher throughput (3275.0 vs. 2349.9 images/s), substantially outperforming competitive token-pruning methods.

Moreover, the results generalize well to other backbones. For Swin-S and Swin-B, HetDPT delivers consistent improvements over WD-Pruning baselines, achieving up to 1.23× speedup on Swin-S. These consistent gains across both plain and other ViTs confirm the versatility and robustness of our method.

\begin{table}[ht]
    \centering
    \small
    \caption{Comparison with SOTA ViT compression methods and other off-the-shelf ViT models on ImageNet-1k.}
    \label{tab:comparison_sota}
    \setlength{\tabcolsep}{1mm}
    \begin{tabular}{lccccc}
    \toprule
    \textbf{Method} & \textbf{Top-1 (\%)} & \textbf{MACs (G)} & \textbf{Throughput (images/s)} & \textbf{Params (M)} & \textbf{Speedup (×)} \\
    \midrule
    DeiT-B-KD & 83.3 & 17.7 & 741.5 & 87.3 & 1.00 \\
    Cait-S24 & 83.5 & 8.6 & 800.8 & 46.9 & 1.08 \\
    \rowcolor{gray!20}
    HetDPT--KD & 83.3 & 10.5 & 1169.7 & 54.9 & 1.58 \\
    DeiT-S-KD& 81.2 & 4.6 & 2370.6 & 22.1 & 3.20 \\
    Swin-T & 81.2 & 4.5 & 1913.24 & 28.3 & 2.58 \\
    NViT-S & 82.2 & 3.9 & 2394.5 & 20.8 & 3.23 \\
    TNT-S & 81.5 & 4.8 & 1168.9 & 23.8 & 1.58 \\
    CrossVit-S & 81.0 & 5.6 & 1958.4 & 26.9 & 2.64 \\
    Efficientformer-L3 & 82.4 & 3.9 & 2482.5 & 31.4 & 3.35 \\
    PVTv2-B2 & 82.0 & 3.9 & 1848.3 & 25.4 & 2.49 \\
    Isomorphic-Pruning4.2G & 82.4 & 4.2 & 2516.0 & 20.7 & 3.39 \\
    \rowcolor{gray!20}
    HetDPT+-3.7G & 82.5 & 3.7 & 2763.9 & 20.0 & 3.73 \\
    Isomorphic-Pruning2.6G & 81.1 & 2.6 & 3145.8 & 13.1 & 4.24 \\
    \rowcolor{gray!20}
    HetDPT+-2.4G & 81.4 & 2.4 & 3845.6 & 13.2 & 5.19 \\
    DeiT-T-KD & 74.5 & 1.3 & 6760.5 & 5.9 & 9.12 \\
    NViT-T & 76.2 & 1.2 & 6013.7 & 6.7 & 8.11 \\
    Isomorphic-Pruning1.2G & 77.5 & 1.2 & 6387.4 & 5.7 & 8.61 \\
    \rowcolor{gray!20}
    HetDPT+-1.1G & 77.4 & 1.1 & 6814.0 & 6.2 & 9.19 \\
    \bottomrule
    \end{tabular}
\end{table}

\textbf{Comparison HetDPT+ with SOTA ViT compression methods and other off-the-shelf ViT models on ImageNet-1k.} We further benchmark our enhanced HetDPT+ against state-of-the-art extreme compression approaches, particularly width pruning methods such as Isomorphic Pruning and NViT, while also including results from efficient architecture designs like Swin, EfficientFormer, and CrossViT for broader context. As summarized in Table~\ref{tab:comparison_sota}, HetDPT+ establishes new performance–efficiency trade-offs across different compression levels.

At moderate compression, HetDPT+-3.7G surpasses Isomorphic Pruning-4.2G in both accuracy (82.5\% vs. 82.4\%) and efficiency, reducing parameters (20.0M vs. 20.7M), lowering MACs (3.7G vs. 4.2G), and improving throughput (2763.9 vs. 2516.0), thereby increasing speedup to 3.77×.

At stronger compression levels, HetDPT+-2.4G achieves 81.4\% accuracy compared to 81.1\% with Isomorphic Pruning-2.6G, while delivering reduced compute cost and a faster 5.19× speedup, nearly 1× higher than the competitor.

Finally, in ultra-extreme settings, HetDPT+-1.1G delivers 77.4\% accuracy using only 1.1G MACs, outperforming both DeiT-T and Isomorphic Pruning-1.2G, and reaching an impressive 9.19× speedup with throughput exceeding 6800 images/s on H800 PCIe.

These comprehensive results demonstrate that HetDPT+ pushes beyond the limitations of depth-width pruning, closing the gap with and even surpassing state-of-the-art width pruning methods, thus offering a new pathway towards extreme yet effective ViT compression.

\textbf{\hl{Excellent orthogonality to token reduction methods.}}
We further investigate the orthogonality between HetDPT's depth pruning and token reduction methods by combining HetDPT with both GTP-ViT~(\citep{GTPVIT}) and ToMe. Specifically, we apply each token reduction method to both Dense-DeiT-B and HetDPT, and compare their throughput--accuracy trade-offs in Fig.~\ref{fig:gtp}.

For GTP-ViT, HetDPT shows strong compatibility with token pruning across multiple operating points. For example, when the corresponding setting yields an accuracy of 81.47\%, HetDPT+GTP achieves a throughput of 1461.76, corresponding to a 1.24$\times$ improvement over the dense baseline throughput of 1175.5, with only a 0.33\% accuracy drop from the original 81.8\%. More importantly, under comparable accuracy drops, HetDPT+GTP consistently remains above Dense-DeiT-B+GTP in throughput, indicating that the gains from depth pruning and token pruning are complementary rather than conflicting.

A similar trend is observed for ToMe. Across a wide range of operating points, HetDPT+ToMe consistently achieves higher throughput than Dense-DeiT-B+ToMe at similar accuracy levels. For instance, at around 0.9\% accuracy drop, HetDPT+ToMe reaches a throughput of 1410.6, compared with 884.07 for Dense-DeiT-B+ToMe. Even at more aggressive token reduction settings, the advantage of HetDPT remains stable, showing that the benefit of depth pruning is preserved when combined with token merging.

Overall, these results demonstrate that the effectiveness of HetDPT is not limited to a single token pruning strategy or a specific pruning ratio. Instead, HetDPT maintains strong orthogonality with both token pruning (GTP-ViT) and token merging (ToMe), consistently improving inference efficiency while preserving competitive accuracy.

\begin{figure}[bth]
  \centering
  \includegraphics[width=0.85\linewidth]{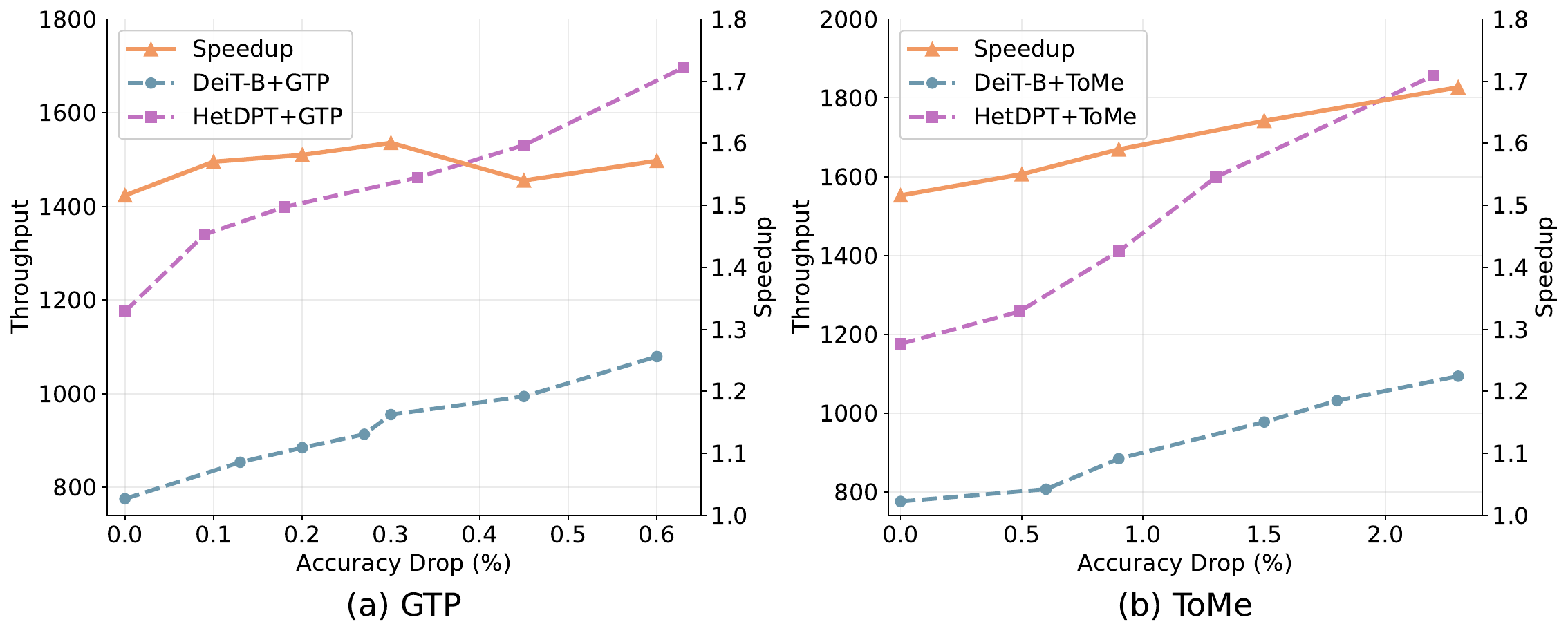}
  \caption{Orthogonality: HetDPT boosts inference efficiency by combining token pruning (a) GTP-ViT(b) ToMe.}
  \label{fig:gtp}
\end{figure}

\section{More Details About Polynomial Approximation of MAP}
\label{app:map-poly}

\noindent
Roadmap. This appendix details: (i) the data collection procedures used to build the regression set for the MAP; (ii) the pre-transformation that maps discrete layer counts to normalized variables; (iii) the polynomial approximation of MAP with Leave-2-out cross-validation (L2OCV) for model selection; (iv) a concrete fitted result (degree, MAE/RMSE, and the polynomial); (v) the inverse transformation back to pruning ratios; (vi) runtime overhead; and (vii) a compact, colorized reference implementation.

{\subsection{Data Collection}}
We employ two efficient data-collection procedures to obtain training samples for MAP regression: (1) single layer-type progressive pruning (Algorithm~\ref{algo:fast}); and (2) interleaved pruning of attention and activation (Algorithm~\ref{algo:iterative}). Both procedures follow a prune--fine-tune--evaluate loop to produce representative samples at low cost.

\begin{algorithm}[t]
\caption{Lightweight data collection for MAP regression (single layer type).}
\label{algo:fast}
\begin{algorithmic}[1]

\REQUIRE pre-trained model $Y_0$, number of rounds $rds$, pruning budget $k$, total layers $L$

\STATE Initialize $\mathit{train\_data} = \{(\tilde{m}_{0}^a, \tilde{m}_{0}^g, a_0)\}$

\STATE $\tilde{m}_{\max}^a = \frac{k}{L}$, $\tilde{m}_{\max}^g = \frac{k}{L}$

\FOR{$n = 1$ to $rds$}

    \STATE $x_n = x_{0} + n \cdot \frac{x_{\max} - x_0}{rds}$
    \STATE Prune $Y_{n-1}$ along \textit{layer-type} to ratio $x_n$ to obtain $Y_n$
    \STATE Fine-tune $Y_n$ and evaluate $\rightarrow (\tilde{m}_{n}^a, \tilde{m}_{n}^g, a_n)$
    \STATE Append $(\tilde{m}_{n}^a, \tilde{m}_{n}^g, a_n)$ to $\mathit{train\_data}$

\ENDFOR

\RETURN $\mathit{train\_data}$

\end{algorithmic}
\end{algorithm}

\begin{algorithm}[t]
\caption{Lightweight data collection for MAP regression (interleaved pruning).}
\label{algo:iterative}
\begin{algorithmic}[1]

\REQUIRE pre-trained model $Y_0$, number of rounds $rds$, pruning budget $k$, total layers $L$

\STATE Initialize $\mathit{train\_data} = \{(\tilde{m}_{0}^a, \tilde{m}_{0}^g, a_0)\}$

\STATE $\tilde{m}_{\max}^a = \frac{k}{L}$, $\tilde{m}_{\max}^g = \frac{k}{L}$

\FOR{$n = 1$ to $rds$}

    \STATE $\tilde{m}_{n}^a = \tilde{m}_{0}^a + n \cdot \frac{\tilde{m}_{\max}^a - \tilde{m}_{0}^a}{rds}$
    \STATE $\tilde{m}_{n-1}^g = \tilde{m}_{0}^g + (n - 1) \cdot \frac{\tilde{m}_{\max}^g - \tilde{m}_{0}^g}{rds}$
    \STATE Prune $Y_{n-1}$ along \textit{attention} to ratio $\tilde{m}_{n}^a$ to obtain $Y_{n}'$
    \STATE Fine-tune $Y_{n}'$ and evaluate $\rightarrow (\tilde{m}_{n}^a, \tilde{m}_{n-1}^g, a_n)$
    \STATE Append $(\tilde{m}_{n}^a, \tilde{m}_{n-1}^g, a_n)$ to $\mathit{train\_data}$

    \STATE $\tilde{m}_{n}^g = \tilde{m}_{0}^g + n \cdot \frac{\tilde{m}_{\max}^g - \tilde{m}_{0}^g}{rds}$
    \STATE Prune $Y_{n}'$ along \textit{activation} to ratio $\tilde{m}_{n}^g$ to obtain $Y_{n}$
    \STATE Fine-tune $Y_{n}$ and evaluate $\rightarrow (\tilde{m}_{n}^a, \tilde{m}_{n}^g, a_n)$
    \STATE Append $(\tilde{m}_{n}^a, \tilde{m}_{n}^g, a_n)$ to $\mathit{train\_data}$

\ENDFOR

\RETURN $\mathit{train\_data}$

\end{algorithmic}
\end{algorithm}

\subsection{Pre-Transformation from Discrete Layer Counts to Normalized Variables}

The original pruning configuration is specified by discrete layer counts. Let $n_a$ and $n_g$ denote the numbers of \emph{retained} attention and activation layers, respectively, and let $L$ be the total number of layers. We convert these to \emph{retained ratios}
\[
a := \frac{n_a}{L}, \qquad t := \frac{n_g}{L}, \qquad a,t \in \Bigl\{\frac{1}{L}, \frac{2}{L}, \dots, 1\Bigr\}.
\]
If the original description uses ``prune $x$ layers'', then the retained layer count is $n=L-x$, hence the retained ratio is $a=(L-x)/L$ (and similarly for $t$). This transformation not only facilitates smooth performance prediction through low-degree polynomial regression in a continuous domain, but also reformulates the discrete pruning budget into an analytically tractable linear constraint ($a+t=2-k/L$) in the $(a,t)$ space, thereby simplifying theoretical analysis and enabling efficient optimization.

In the main text, $\tilde m^a$ and $\tilde m^g$ denote pruning ratios:
\[
\tilde m^a=\frac{1}{L}\sum_{l=1}^L (1-\hat m_l^a),\qquad
\tilde m^g=\frac{1}{L}\sum_{l=1}^L (1-\hat m_l^g).
\]
In this appendix, we use retained ratios $(a,t)$ for polynomial fitting. They are related to the main-text pruning ratios by
\[
a = 1-\tilde m^a,\qquad t = 1-\tilde m^g.
\]
Therefore, the pruning-budget constraint $\tilde m^a+\tilde m^g=k/L$ is equivalent, in the $(a,t)$ domain, to
\[
a+t=2-\frac{k}{L}.
\]
In the empirical tables (e.g., Table~\ref{tab:acc_results}), each $(a,t)$ pair already denotes the retained ratios.

{
\subsection{Polynomial Approximation and Leave-2-out Cross-Validation}}
We approximate MAP with a bivariate polynomial of total degree $\kappa$:
\[
\mathcal{P}(a, t;\Theta) = \sum_{i+j\le \kappa} \theta_{ij}\, a^i t^j.
\]
To reduce overfitting given limited samples, $\kappa=2$ is typically sufficient in practice. We perform Leave-2-out cross-validation (L2OCV) over $\kappa \in \{1,2,3,4\}$: for each $\kappa$, we repeatedly leave out two samples, fit least squares on the remaining data, predict the held-out samples, aggregate predictions for all points, and compute MAE/RMSE. We then pick the $\kappa$ with the lowest validation RMSE, refit on the full dataset to obtain the final $\hat{\Theta}$, and search along the linear constraint $a+t=2-k/L$ over the discrete feasible set to find the recommended configuration.

{
\paragraph{Concrete result.} }
With the dataset in Table~\ref{tab:acc_results} and the discrete constraint $a+t=16/12$, L2OCV selects degree $\kappa=2$ with
\[
\text{MAE}=0.4066,\quad \text{RMSE}=0.4870.
\]
The fitted polynomial MAP is
\begin{align}
\hat{\mathcal{P}}(a,t)
&= 31.68
\;+\; 50.65\,a
\;+\; 39.29\,t
\;-\; 19.79\,a^{2}
\;-\; 8.34\,a t
\;-\; 11.70\,t^{2}.
\label{eq:map-fitted-poly-main}
\end{align}
For completeness, the unrounded coefficients (from the implementation) are:
$31.684374$, $50.653461$, $39.298158$, $-19.795489$, $-8.338992$, $-11.704586$,
corresponding to the terms $1$, $a$, $t$, $a^2$, $a t$, and $t^2$ respectively.

{
\paragraph{Example dataset.} }
An example of DeiT-Base with $L=12$ is shown in Table~\ref{tab:acc_results}, where $a$ and $t$ are retained ratios ($n/L$):
\begin{table}[htbp]
    \centering
    \caption{Accuracy under different retained ratios $(a,t)$.}
    \begin{tabular}{ccc|ccc}
        \hline
        $a$ & $t$ & Accuracy (\%) & $a$ & $t$ & Accuracy (\%) \\
        \hline
        1.00 & 1.00 & 81.80 & 1.00 & 0.92 & 81.07 \\
        0.92 & 1.00 & 81.31 & 1.00 & 0.83 & 80.40 \\
        0.83 & 1.00 & 80.90 & 1.00 & 0.75 & 78.90 \\
        0.75 & 1.00 & 80.20 & 1.00 & 0.67 & 77.80 \\
        0.67 & 1.00 & 78.10 & 1.00 & 0.58 & 76.70 \\
        0.58 & 1.00 & 77.40 & 1.00 & 0.50 & 75.20 \\
        0.50 & 1.00 & 72.69 & 1.00 & 0.42 & 74.50 \\
        0.42 & 1.00 & 69.44 & 1.00 & 0.33 & 73.90 \\
        0.33 & 1.00 & 64.84 & 0.92 & 0.92 & 80.34 \\
        0.92 & 0.83 & 80.03 & 0.83 & 0.83 & 78.88 \\
        0.83 & 0.75 & 78.08 & 0.75 & 0.75 & 76.52 \\
        \hline
    \end{tabular}
    \label{tab:acc_results}
\end{table}

{
\subsection{Inverse Transformation and Reporting} }
Once the polynomial MAP is fitted as
\begin{align}
\hat{\mathcal{P}}(a,t)
&= 31.68
\;+\; 50.65\,a
\;+\; 39.29\,t
\;-\; 19.79\,a^{2}
\;-\; 8.34\,a t
\;-\; 11.70\,t^{2},
\label{eq:map-fitted-poly-main}
\end{align}
we can express it in terms of the pruning ratios by the inverse transformation
\[
\tilde{m}^a = 1 - a,\qquad \tilde{m}^g = 1 - t.
\]
This yields the equivalent polynomial
\begin{equation}
\mathcal{P}(\tilde m^a,\tilde m^g) 
= 81.79 \;-\; 2.73\,\tilde m^a \;-\; 7.55\,\tilde m^g 
\;-\; 19.79\,(\tilde m^a)^{2} \;-\; 8.34\,\tilde m^a\tilde m^g 
\;-\; 11.70\,(\tilde m^g)^{2}.
\label{eq:map-poly-pruned}
\end{equation}

{
\paragraph{Optimal configuration.} }
The pruning budget requires
\[
\tilde m^a + \tilde m^g = \tfrac{k}{L}.
\]
Since both $\tilde m^a$ and $\tilde m^g$ take values from the finite discrete set 
$\{0,1/L,\dots,1\}$ and must satisfy the budget constraint, the optimal configuration can be obtained simply by enumerating all feasible pairs $(\tilde m^a,\tilde m^g)$ and selecting the one with the highest value of $\mathcal{P}(\tilde m^a,\tilde m^g)$. This exhaustive search is computationally inexpensive given the small search space.

\begin{algorithm}[t]
\caption{Exhaustive Search for Optimal Pruning Ratios}
\label{alg:pruning_search}
\begin{algorithmic}[1]

\REQUIRE Pruning budget $k/L$, objective function $\mathcal{P}(\tilde m^a, \tilde m^g)$
\ENSURE Optimal configuration $(\tilde m^{a^\star}, \tilde m^{g^\star})$

\STATE $\mathcal{M} \leftarrow \{0, 1/L, \dots, k/L\}$
\STATE best\_score $\leftarrow -\infty$
\STATE $(\tilde m^{a^\star}, \tilde m^{g^\star}) \leftarrow (0, 0)$

\FORALL{$\tilde m^a \in \mathcal{M}$}

    \STATE $\tilde m^g \leftarrow k/L - \tilde m^a$

    \IF{$\tilde m^g \in \mathcal{M}$}

        \STATE score $\leftarrow \mathcal{P}(\tilde m^a, \tilde m^g)$

        \IF{$\text{score} > \text{best\_score}$}
            \STATE best\_score $\leftarrow \text{score}$
            \STATE $(\tilde m^{a^\star}, \tilde m^{g^\star}) \leftarrow (\tilde m^a, \tilde m^g)$
        \ENDIF

    \ENDIF

\ENDFOR

\RETURN $(\tilde m^{a^\star}, \tilde m^{g^\star})$

\end{algorithmic}
\end{algorithm}

{
\paragraph{Equivalent algorithm.} }
Alternatively, one may directly optimize the polynomial in the retained-variable space $(a,t)$ along the line constraint $a+t = 2-k/L$. The solution $(a^\star,t^\star)$ is then mapped back to pruning ratios via
\[
(\tilde{m}^a)^\star = 1 - a^\star,\qquad (\tilde{m}^g)^\star = 1 - t^\star,
\]
which automatically satisfies the budget. Both approaches are mathematically equivalent, as they differ only by a linear change of variables.

{\subsection{Runtime} }
The MAP fitting is a small-scale least-squares regression on a handful of samples and low-degree polynomials. Its computational overhead is negligible compared to a single fine-tuning run. In practice, the MAP fitting adds virtually no runtime overhead to our pipeline.


{
\subsection{Reference Implementation} }
We provide a compact, colorized reference implementation that performs L2OCV to select the polynomial degree, fits the final model, and searches the discrete feasible set. Inputs use retained ratios $(a,t)$; to report pruning ratios, apply $\tilde{m}^a = 1-a$ and $\tilde{m}^g = 1-t$.

\begin{lstlisting}{python}
import numpy as np
from itertools import combinations

TOTAL_SUM_NUM = 16  # a + t = TOTAL_SUM_NUM / 12

data = np.array([
    (1.00, 1.00, 81.8), (0.92, 1.00, 81.31), (0.83, 1.00, 80.9),
    (0.75, 1.00, 80.2), (0.67, 1.00, 78.2), (0.58, 1.00, 77.4),
    (1.00, 0.92, 81.26), (1.00, 0.83, 80.4), (1.00, 0.75, 78.9),
    (1.00, 0.67, 77.8), (1.00, 0.58, 76.7),
    (0.92, 0.92, 80.34), (0.92, 0.83, 80.03),
    (0.83, 0.83, 78.88), (0.83, 0.75, 78.08),
    (0.75, 0.75, 76.52),
], dtype=float)

A, y = data[:, :2], data[:, 2]

def exps(k): return [(i, t-i) for t in range(k+1) for i in range(t+1)]
def Phi(A, k):
    E = exps(k)
    return np.stack([np.prod(np.power(x, E), axis=1) for x in A], 0)
def fit(X, y): return np.linalg.lstsq(X, y, rcond=None)[0]
def pred(X, c): return X @ c
def l2ocv(A, y, k):
    N = len(y); P = np.zeros(N); C = np.zeros(N, int)
    for i, j in combinations(range(N), 2):
        m = np.ones(N, bool); m[[i, j]] = False
        c = fit(Phi(A[m], k), y[m])
        for t in (i, j):
            P[t] += pred(Phi(A[[t]], k), c)[0]; C[t] += 1
    e = P/C - y
    return np.mean(np.abs(e)), np.sqrt(np.mean(e**2))
def search_best(c, k, total=TOTAL_SUM_NUM):
    kmin, kmax = max(1, total-11), min(11, total-1)
    cand = []
    for s in range(kmin, kmax+1):
        a, t = s/12, (total-s)/12
        acc = float(pred(Phi(np.array([[a, t]]), k), c)[0])
        cand.append((s, a, t, acc))
    return max(cand, key=lambda z: z[3]), cand

results = []
for deg in [1, 2, 3, 4]:
    mae, rmse = l2ocv(A, y, deg)
    C = fit(Phi(A, deg), y)
    best, _ = search_best(C, deg)
    results.append((deg, mae, rmse, C, best))
best_res = min(results, key=lambda r: r[2])
deg, mae, rmse, C, (s, a, t, acc) = best_res
print(f"Best degree: {deg}, MAE={mae:.4f}, RMSE={rmse:.4f}")
print(f"Recommended: a={a:.4f}, t={t:.4f}, pred acc={acc:.4f}")
print(f"Pruning ratios: tau_ma={1-a:.4f}, tau_mg={1-t:.4f}")
\end{lstlisting}

\section{More Details About Ablation Study}\label{appendix:ablation}
In the ablation study, we conduct experiments on the two steps of Stage 1 in HetDPT. Additionally, we compare the effects of applying TE-Static and individual components for heterogeneous pruning. This section explicitly lists the pruning indices to better illustrate the challenges mentioned in our insights.

\noindent
\textbf{Without TE-Static and Stage1-Step1} We initially evaluated a baseline pruning strategy that directly selects layers using trainable importance parameters without TE-Static and Stage1-Step1. This approach methodologically neglects the gradient disparity challenge between self-attention layers and activation layers, instead employing direct cross-type comparisons. As demonstrated in Tab.\ref{tab:depthshrinker}, this method consistently induced systematic performance degradation across various architectures (DeiT-B and DeiT-S) and pruning intensities. Further analysis reveals its bias towards pruning more activation layers, likely due to inherent gradient magnitude disparities between layer types.

\noindent
\textbf{Without Stage1-Step2 (Trainable importance scores)}
We subsequently validated the critical role of Stage1-Step2 through a comparative approach that directly uses TE-Static for layer selection without trainable importance scores. This method fails to model layer interdependencies and eliminate cross-type gradient disparity challenges. As shown in Tab. \ref{tab:direct_choose}, this method exhibits significantly degraded performance compared to HetDPT, conclusively demonstrating the necessity of Stage1-Step2's trainable importance mechanism.

\begin{table}[htbp]
    \centering
    \caption{Ablation on TE-Static and Stage1-Step1: HetDPT (bold) vs. without TE-Static and Stage1-Step1 (non-bold)}
    \label{tab:depthshrinker}
    \small
    \setlength{\tabcolsep}{1mm}
    \begin{tabular}{l|c|ccl}
        \toprule
         Models & Pruned & Self-attention Index & Activation Index & Acc \\
         \hline
        \multirow{4}{*}{DeiT-S} & \multirow{2}{*}{6 layers} & [7,11] & [0,1,10,11] & 79.6 \\
         & & \textbf{[1,10,11]} & \textbf{[8,10,11]} & \textbf{80.1} \\
        \cline{2-5}
         & \multirow{2}{*}{8 layers} & [7,8,11] & [0,1,9,10,11] & 78.5 \\
         & & \textbf{[1,7,10,11]} & \textbf{[7,8,10,11]} &\textbf{ 79.7} \\
        \hline
        \multirow{4}{*}{DeiT-B} & \multirow{2}{*}{8 layers} & [0,3] & [0,1,3,4,9,10] & 80.8 \\
         & & \textbf{[0,3,7,11]} & \textbf{[2,7,8,11]} & \textbf{82.1} \\
        \cline{2-5}
         & \multirow{2}{*}{10 layers} & [0,1,2,5] & [0,1,2,3,9,10] & 78.1 \\
         & & \textbf{[0,3,7,8,11]} & \textbf{[2,7,8,10,11]} & \textbf{81.8} \\
        \bottomrule
    \end{tabular}
\end{table}

\begin{table}[htbp]
    \centering
    \caption{Ablation on Stage1-Step2: HetDPT (bold) vs. without trainable importance scores (non-bold)}
    \label{tab:direct_choose}
    \small
    \setlength{\tabcolsep}{1mm}
    \begin{tabular}{l|c|ccl}
        \toprule
         Models & Pruned & Self-attention Index & Activation Index & Acc \\
         \hline
        \multirow{4}{*}{DeiT-S} & \multirow{2}{*}{6 layers} & [1,9,11] & [9,10,11] & 79.9 \\
         & & \textbf{[1,10,11] }& \textbf{[8,10,11] }& \textbf{80.1} \\
        \cline{2-5}
         & \multirow{2}{*}{8 layers} & [1,9,10,11] & [8,9,10,11] & 79.4 \\
         & & \textbf{[1,7,10,11]} & \textbf{[7,8,10,11]} & \textbf{79.7} \\
        \hline
        \multirow{4}{*}{DeiT-B} & \multirow{2}{*}{8 layers} & [0,3,9,11] & [7,8,9,10] & 81.1 \\
         & & \textbf{[0,3,7,11]} & \textbf{[2,7,8,11]} & \textbf{82.1} \\
        \cline{2-5}
         & \multirow{2}{*}{10 layers} & [0,3,8,9,11] & [7,8,9,10,11] & 80.5 \\
         & & \textbf{[0,3,7,8,11]} & \textbf{[2,7,8,10,11]} & \textbf{81.8 }\\
        \bottomrule
    \end{tabular}
\end{table}

\begin{table}[htbp]
    \centering
    \caption{HetDPT (bold) vs. TE-Static for heterogeneous depth pruning (non-bold)}
    \label{tab:NOSE_JOINT}
    \small
    \setlength{\tabcolsep}{1mm}
    \begin{tabular}{l|c|ccl}
        \toprule
         Models & Pruned  & Self-attention Index & Activation Index & Acc\\
         \hline
        \multirow{4}{*}{DeiT-S} & \multirow{2}{*}{6 layers} & [1,9,10,11] & [9,10] & 79.7 \\
         & & \textbf{[1,10,11]} & \textbf{[8,10,11]} & \textbf{80.1} \\
        \cline{2-5}
         & \multirow{2}{*}{8 layers} & [1,8,9,10,11] & [9,10,11] & 78.5 \\
         & & \textbf{[1,7,10,11]} & \textbf{[7,8,10,11]} & \textbf{79.7} \\
        \hline
        \multirow{4}{*}{DeiT-B} & \multirow{2}{*}{8 layers} & [5,7,9,10,11] & [6,10,11] & 79.4 \\
         & & \textbf{[0,3,7,11]} & \textbf{[2,7,8,11]} & \textbf{82.1} \\
        \cline{2-5}
         & \multirow{2}{*}{10 layers} & [5,7,9,10,11] & [5,6,9,10,11] & 78.9 \\
         & & \textbf{[0,3,7,8,11]} &\textbf{ [2,7,8,10,11]} & \textbf{81.8} \\
        \bottomrule
    \end{tabular}
\end{table}

\noindent
\textbf{TE-Static for heterogeneous pruning} Finally, we investigated a comparative method that directly employs TE-Static for heterogeneous depth pruning without any training components. This approach calculates the TE metric for both self-attention layers and activation layers, subsequently selecting layers with minimal TE scores as pruning candidates. As demonstrated in Table \ref{tab:NOSE_JOINT}, the TE-Static heterogeneous depth pruning exhibits significant performance degradation. Notably, in contrast to the learning-based pruning observed in Table \ref{tab:depthshrinker}, this method disproportionately removes self-attention layers. This observation aligns with the Recovery Asymmetry challenge discussed before, where self-attention layers demonstrate weaker accuracy recovery capacity during iterative pruning compared to activation layers.
\section{Visualization}
\begin{figure}[bth]
  \centering
  \includegraphics[width=0.6\linewidth]{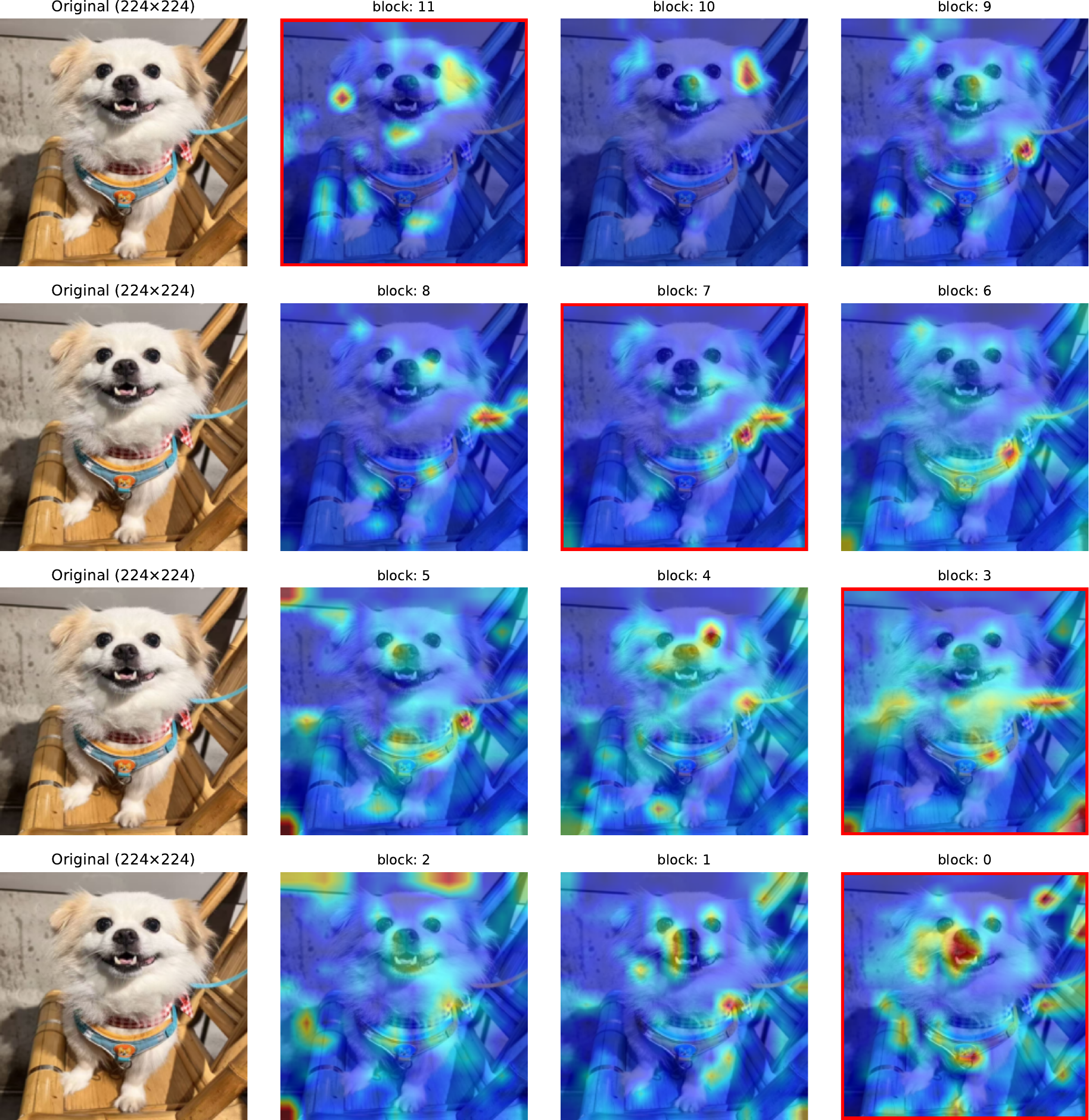}
  \caption{ Visualization of cute dog: input image (left) and visualization of attention maps (right) of DeiT-Base. Attention maps with red
 bounding boxes are the attention layers to be removed.}
  \label{fig:visual_bei}
\end{figure}

\begin{figure}[bth]
  \centering
  \includegraphics[width=0.6\linewidth]{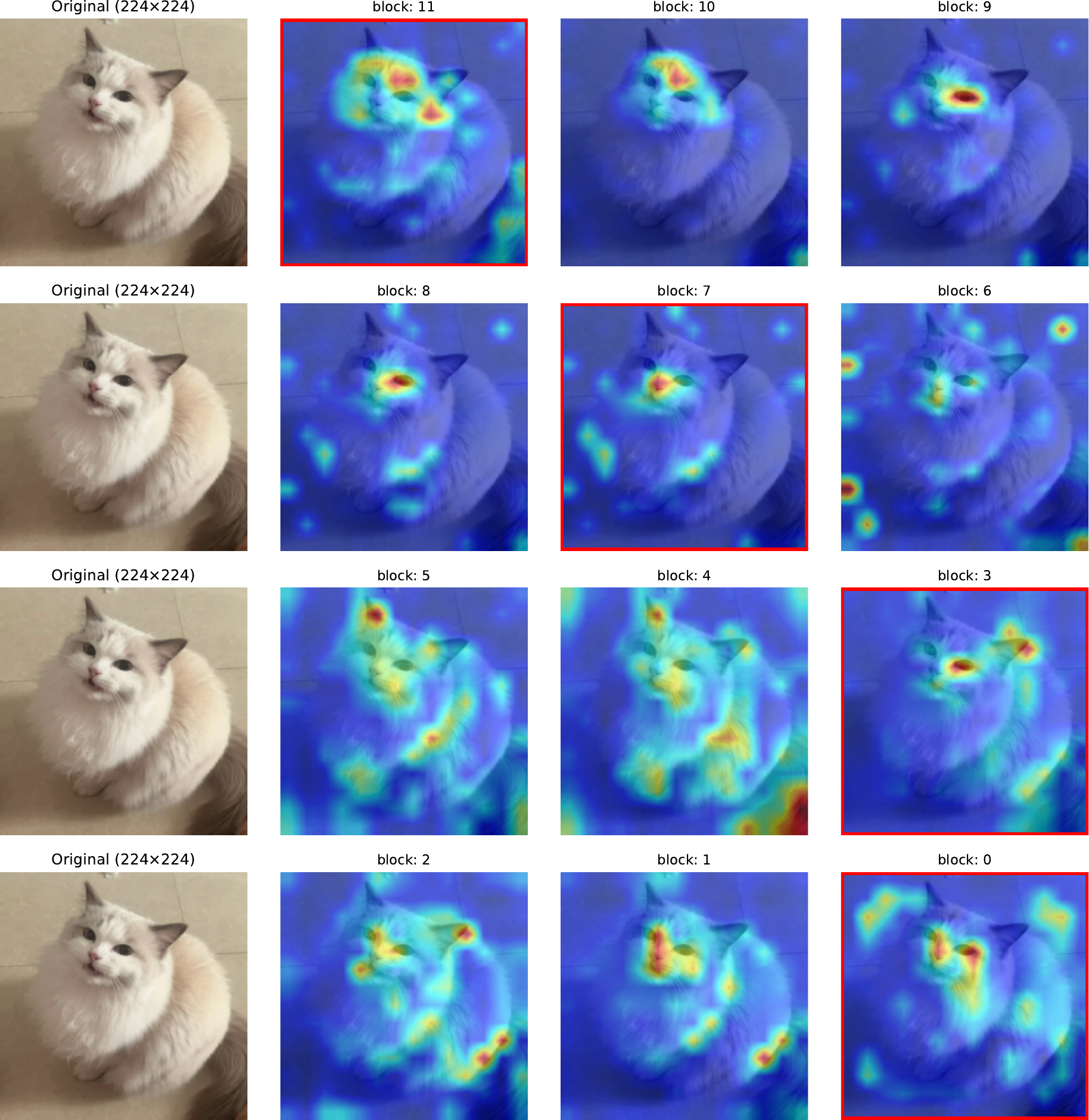}
  \caption{Visualization of cute cat: input image (left) and visualization of attention maps (right) of DeiT-Base. Attention maps with red
 bounding boxes are the attention layers to be removed.}
  \label{fig:visual_cake}
\end{figure}
We employ Attention Explanation \citep{chefer2021transformer} to project the attention maps from the output token back onto the input image, thereby enabling the visualization and interpretation of different blocks of multi-head self-attention within DeiT-Base. This approach provides a fine-grained perspective on how information is propagated and aggregated across layers, and allows us to explicitly observe which regions of the image are emphasized by different attention heads and blocks.

As illustrated in Figure \ref{fig:visual_bei} and Figure \ref{fig:visual_cake}, our method identifies and automatically prunes attention layers that are redundant. Taking the results in Figure \ref{fig:visual_bei} as an example, the layers removed include Layer 0, Layer 3, Layer 7, and Layer 11. A closer examination shows that these pruned layers share highly similar attention distributions with their neighboring layers. Specifically, Layer 0 and Layer 1 both focus on the global context of the dog and its surrounding background; Layers 3, 4, and 5 collectively attend to the dog’s body; Layers 6, 7, and 8 capture similar details of the clothing; and Layers 10 and 11 concentrate on the facial region. Such patterns suggest that the removed layers contribute little additional information beyond what is already captured by adjacent layers.

A similar trend can be observed in Figure \ref{fig:visual_cake} for the cat example, where the discarded layers also exhibit overlapping attention with those retained. This consistent redundancy across different samples underlines the robustness of our pruning strategy. Beyond its practical effect in reducing model complexity, this phenomenon also offers interpretability insights: it indicates that not all attention blocks within transformer architectures contribute equally to the final representation, and some primarily replicate context already modeled by their neighbors.

Collectively, these findings highlight that our approach not only suppresses unnecessary redundancy within the attention structure but also enhances the efficiency and interpretability of the model. By demonstrating that functionally overlapping layers can be safely pruned without sacrificing focus on key semantic regions, our method indirectly validates the hypothesis that a more compact layer set can maintain — and potentially improve — the effectiveness of transformer-based vision models.

\section{Theoretical Foundations of MAP}
\label{app:map_theory}

This appendix provides a strengthened and fully rigorous theoretical foundation for
\textbf{Model Accuracy Predictor (MAP)}.  
Relative to the main text, we supply additional lemmas on the discreteness,
continuity, polynomial approximability, and consistency under noise.
The results consolidate and mathematically justify Theorem~1--3. Throughout, let $L$ denote the number of Transformer blocks.

\subsection{Preliminaries}
Assume a ViT $\mathcal{F}$ has $L$ Transformer blocks. The $l$-th  Transformer block $f_l$ typically integrates: 1) two linear layers, i.e.,$\mathcal{L}_{l,1}$ and $\mathcal{L}_{l,2}$, 2) \hl{an attention layer} $\mathcal{A}_l$, 3) a GELU activation function layer $\mathcal{G}_l$. For simplicity, we omit auxiliary components such as residual connections and layer normalization. Thus, 
{
\begin{equation}
    \begin{aligned}
f_{l}(\cdot ):=\mathcal{L}_{l,2}\circ \mathcal{G}_l \circ \mathcal{L} _{l,1}\circ \mathcal{A}_l \circ f_{l-1}(\cdot ),
\end{aligned}
\end{equation}
where $\circ$  denotes the function composition operation. }
We aim to reduce computational overhead through heterogeneous depth pruning of attention layers and redundant GELU activation function layers. That is,{
\begin{equation}\label{app:main_problem}
    \begin{aligned}
        &f^{\hat{m}}_l(\cdot ):= \mathcal{L} _{l,2} \circ h\left( \mathcal{G}_l \right) \circ \mathcal{L} _{l,1} \circ h\left( \mathcal{A} _l \right) \circ f_{l-1}^{\hat{m}} (\cdot ),
        \\
        &h\left( \mathcal{G}_l \right) = \hat{m}_{l}^{g} \circ \mathcal{G}_l + \left( 1-\hat{m}_{l}^{g} \right) \circ I,
        \\
        &h\left( \mathcal{A} _l \right) = \hat{m}_{l}^{a} \circ \mathcal{A} _l + \left( 1-\hat{m}_{l}^{a} \right) \circ I,
        \\
        &\hat{m}_l^{a}, \hat{m}_l^{g} \in \{0,1\}.
        \\
        &\vM := \left( \{\hat{m}_l^a\}_{l=1}^{L}, \{\hat{m}_l^g\}_{l=1}^{L} \right)
    \end{aligned}
\end{equation}
Notably, $\hat{m}_l^{g}$ and $\hat{m}_l^{a}$ are the binary masks.}
A mask's value of 1 preserves the corresponding layer, while 0 prunes it. $I$ denotes the identity mapping, i.e., a direct skip connection when the layer is pruned. 
{
$\vM$ is the set of all the binary masks.
A ViT with such masks is referred to as $\mathcal{F}^{\hat{m}}$. }
Thus, the heterogeneous depth pruning problem is formulated as:
{
\begin{equation}
    \begin{aligned}
        &
        \mathop{\mathrm{arg}\min}_{\vW,\vM} \ell \left( \mathcal{F}^{\hat{m}}(\vX, \vW, \vM), \vY \right), 
         \\
         & \qquad \mathrm{s.t.} \quad \tilde{m}^a + \tilde{m}^g = k/L,
         \\
          \tilde{m}^a = \frac{1}{L}\sum_{l=1}^L &\mathbf{1}\{\hat m_l^a=0\},\quad
\tilde{m}^g = \frac{1}{L}\sum_{l=1}^L \mathbf{1}\{\hat m_l^g=0\}.
    \end{aligned}
\end{equation}
where $(\vX, \vY)$ are respectively samples and labels of fitting data, and $\vW$ is the weights of $\mathcal{F}^{\hat{m}}$. }
$\ell(\cdot)$ refers to the loss function to measure data fitting quality, e.g., cross-entropy loss for classification tasks.
Thus the admissible ratio set $\mathcal D$ is
\[
\mathcal D=\{0,\tfrac1L,\dots,1\}^2,
\qquad |\mathcal D|=(L+1)^2.
\]

For each $(\tilde m^a,\tilde m^g)\in\mathcal D$,
the prune–finetune–evaluate pipeline yields a deterministic validation accuracy $\mathcal P$ :
\[
\mathcal P:\mathcal D\to [0,1].
\]

\subsection{Well-Posedness of Pruning--Ratio Optimization (Full Proof for Theorem~\ref{thm:existence_opt_main})}
We collect and strengthen the results establishing that an optimal pruning configuration
always exists under any pruning budget.

\begin{lemma}[Finite representability]
\label{lem:finite_domain}
$\mathcal D$ is a finite subset of $[0,1]^2$.
\end{lemma}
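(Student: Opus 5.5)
The claim reduces to counting: $\mathcal D$ is by definition the Cartesian product $S\times S$ with $S=\{0,\tfrac1L,\tfrac2L,\dots,1\}=\{j/L: j=0,1,\dots,L\}$. So I will show that $S$ is a finite subset of $[0,1]$ and then pass to the product.

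First, the map $j\mapsto j/L$ from $\{0,1,\dots,L\}$ onto $S$ is a bijection, since $L\ge 1$ and division by $L$ is injective. Hence $|S|=L+1<\infty$. Each element $j/L$ satisfies $0\le j/L\le 1$ because $0\le j\le L$ and $L>0$, so $S\subset[0,1]$.

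Next, a Cartesian product of two finite sets is finite, with cardinality equal to the product of the cardinalities. This gives $|\mathcal D|=|S|^2=(L+1)^2$, matching the count stated in the preliminaries. The inclusion $S\subset[0,1]$ immediately yields $\mathcal D=S\times S\subset[0,1]^2$.

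There is no real obstacle in this argument. The only point to make explicit is that $L$ is a positive integer, the number of Transformer blocks, so the grid $\{j/L\}$ is well defined and the division is legitimate. I will also note, for use in Theorem~\ref{thm:existence_opt_main}, that every subset of $\mathcal D$ is finite, including the constrained set $\mathcal D_k$ cut out by $\tilde m^a+\tilde m^g=k/L$.
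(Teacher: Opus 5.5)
Your proposal is correct and takes the same approach as the paper, whose proof is the one-line observation $\mathcal D=\{(i/L,j/L):0\le i,j\le L\}$. You simply write out the cardinality count $(L+1)^2$ and the inclusion in $[0,1]^2$ explicitly.
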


\begin{proof}
Immediate from 
$\mathcal D=\{(i/L,j/L):0\le i,j\le L\}$.
\end{proof}

\begin{lemma}[Functional well-definedness]
\label{lem:P_well_defined}
$\mathcal P:\mathcal D\to [0,1]$ is well-defined.
\end{lemma}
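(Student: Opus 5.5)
The claim is that $\mathcal P:\mathcal D\to[0,1]$ is well defined. We must check three things: every admissible input $(\tilde m^a,\tilde m^g)\in\mathcal D$ is assigned exactly one output, that output lies in $[0,1]$, and the output does not depend on choices hidden inside the prune--finetune--evaluate pipeline. I would argue this directly from the preliminaries, in three steps.

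\emph{Step 1: every point of $\mathcal D$ is realizable.} By Lemma~\ref{lem:finite_domain}, each element of $\mathcal D$ has the form $(i/L,j/L)$ with $0\le i,j\le L$. For such a pair I exhibit at least one binary mask $\vM$ realizing it: set $\hat m_l^a=0$ for exactly $i$ indices $l$ and $\hat m_l^g=0$ for exactly $j$ indices, with all remaining masks equal to $1$. The formula for $\tilde m^a,\tilde m^g$ then returns $(i/L,j/L)$, so the preimage of every point is nonempty. The masked network $\mathcal F^{\hat m}$ in Eq.~\ref{app:main_problem} is a composition of finitely many well-defined maps, where each $h(\cdot)$ is either the original layer or $I$, so it is a valid network.

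\emph{Step 2: the output is a single value in $[0,1]$.} Validation accuracy is the fraction of correctly classified examples in a finite validation set. It is therefore a rational number in $[0,1]$, which gives the codomain.

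\emph{Step 3: uniqueness, the main obstacle.} Many masks $\vM$ map to the same ratio pair, and fine-tuning can be stochastic, so the value is a priori not unique. I would resolve this by appealing to the paper's modeling convention that the pipeline ``yields a deterministic validation accuracy''. Concretely, I would make the pipeline a fixed procedure on three counts:
\begin{itemize}
\item fix the random seeds and the data order;
\item fix the within-type layer selection rule, namely the deterministic Stage~1--Step~2 ranking with ties broken by the lowest index, so that each ratio pair determines a canonical mask;
\item fix the fine-tuning schedule.
\end{itemize}
The map is then a composition of functions: $\mathcal D\to\{\text{canonical mask}\}\to\{\text{trained weights}\}\to$ accuracy. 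Each arrow is single-valued, so $\mathcal P$ is a function.

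Making this canonical selection explicit is the only nontrivial part of the argument. Everything else is bookkeeping.
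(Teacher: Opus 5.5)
Your proposal is correct and follows the same route as the paper. The paper's proof only notes that each point of $\mathcal D$ corresponds to binary masks and that the prune--finetune--evaluate pipeline is deterministic, so it returns a single accuracy; your extra care (checking the codomain, and making the many-to-one passage from masks to ratio pairs single-valued through a fixed canonical selection rule) just spells out what the paper folds into its determinism stipulation.
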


\begin{proof}
Each point in $\mathcal D$ corresponds to binary masks $\hat m^a,\hat m^g$.  
The pipeline is deterministic and outputs a single accuracy value.
\end{proof}

A pruning budget imposes
\[
\tilde m^a+\tilde m^g = k/L,
\]
yielding the constrained feasible set $\mathcal D_k$:
\[
\mathcal D_k=\Bigl\{
(\tilde m^a,\tilde m^g)\in\mathcal D:
\tilde m^a+\tilde m^g = k/L
\Bigr\}.
\]

\begin{lemma}[Feasible set is finite]
\label{lem:finite_Dk}
$|\mathcal D_k|\le L+1$.
\end{lemma}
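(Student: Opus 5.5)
The plan is to exhibit $\mathcal D_k$ as the image of an injective map from a set of at most $L+1$ indices. By Lemma~\ref{lem:finite_domain}, every element of $\mathcal D$ has the form $(i/L,\,j/L)$ with integers $0\le i,j\le L$. For such a point, the budget constraint $\tilde m^a+\tilde m^g=k/L$ becomes, after multiplying by $L$, the integer equation $i+j=k$.

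Next, define the projection $\pi:\mathcal D_k\to\{0,1,\dots,L\}$ by $\pi(i/L,j/L)=i$. This map is injective on $\mathcal D_k$. Indeed, if two feasible points share the same first coordinate $i$, the constraint forces both second coordinates to equal $j=k-i$, so the points coincide. Hence
\[
|\mathcal D_k|\le|\{0,1,\dots,L\}|=L+1.
\]

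For a sharper description, the image of $\pi$ is exactly the set of integers $i$ with $\max(0,k-L)\le i\le\min(L,k)$. This gives
\[
|\mathcal D_k|=\max\bigl(0,\;\min(L,k)-\max(0,k-L)+1\bigr)\le L+1.
\]
We do not need the exact count; the injection alone yields the bound.

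The argument has no real obstacle. The one point to state carefully is that $k$ is an integer, so that $i+j=k$ is an equation over the integers. If $k$ were allowed to be non-integral, $\mathcal D_k$ would be empty and the bound would hold trivially. Either way, the lemma follows.
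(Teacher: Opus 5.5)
Your argument is correct and matches the paper's proof: fixing $\tilde m^a=i/L$ forces $\tilde m^g=(k-i)/L$, so the first coordinate, which takes at most $L+1$ values, determines the feasible pair. Your exact count and the remark on non-integral $k$ are correct extras, though the injectivity step does not actually need $k$ to be an integer.
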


\begin{proof}
If $\tilde m^a=i/L$, then $\tilde m^g=(k-i)/L$, hence at most $L+1$ feasible pairs.
\end{proof}

\begin{lemma}[Attainment of maximum over a finite domain]
\label{lem:max_finite}
If set $S$ is finite and $f:S\to\mathbb [0,1]$, then $\max_{x\in S}f(x)$ exists.
\end{lemma}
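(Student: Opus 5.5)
Since $S$ is finite, the plan is to prove existence of the maximum by induction on the cardinality $|S|$, which avoids any appeal to compactness or continuity. I will assume $S\neq\emptyset$. The empty case has to be excluded, or else the convention that a maximum over the empty set does not exist has to be noted. In the application to Theorem~\ref{thm:existence_opt_main}, nonemptiness of $\mathcal D_k$ holds whenever $0\le k\le 2L$, because one can choose $i=\min(k,L)$ and $j=k-i$. I would record this as a side remark so that the lemma can be invoked with $S=\mathcal D_k$.

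The base case is $|S|=1$, say $S=\{x_0\}$. Then $f(x_0)$ is trivially the maximum, attained at $x_0$.

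For the inductive step, suppose the claim holds for all nonempty sets of size $n$, and let $|S|=n+1$. Pick any $x_{n+1}\in S$ and set $S'=S\setminus\{x_{n+1}\}$, which has $n$ elements. By the inductive hypothesis there is $x'\in S'$ with $f(x')\ge f(y)$ for all $y\in S'$. Define $x^\star=x'$ if $f(x')\ge f(x_{n+1})$, and $x^\star=x_{n+1}$ otherwise. The total order on $[0,1]\subset\mathbb R$ makes this comparison well-defined. A two-case check then shows that $f(x^\star)\ge f(y)$ for every $y\in S$. Hence $\max_{x\in S}f(x)=f(x^\star)$ exists, and $\arg\max_{x\in S} f$ is nonempty.

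There is no real obstacle here; the only point needing care is the nonemptiness hypothesis. An alternative one-line argument is that the image $f(S)$ is a finite nonempty subset of $\mathbb R$, so its supremum is one of its elements. The bound $f(S)\subset[0,1]$ is not even needed, although it guarantees that the value is finite. Combining the lemma with Lemmas~\ref{lem:finite_Dk} and \ref{lem:P_well_defined} then yields Theorem~\ref{thm:existence_opt_main}.
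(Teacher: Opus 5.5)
Your proof is correct and takes essentially the paper's approach: the paper argues that $f$ takes finitely many values and hence attains a maximum, and your induction on $|S|$ just spells out why a finite set of reals has a largest element. You were right to add $S\neq\emptyset$, which the paper's statement omits and which the lemma needs. Your check that $\mathcal D_k\neq\emptyset$ for integer $0\le k\le 2L$ is exactly what is required to apply the lemma in Theorem~\ref{thm:existence_opt_main}.
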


\begin{proof}
A finite set has finitely many function values, whose maximum exists.
\end{proof}

We now restate and prove the main theorem.

\begin{theorem}[Restatement of Theorem~\ref{thm:existence_opt_main}]
\label{thm:existence_opt}
For any pruning budget $k$, there exists an optimal pruning configuration:
\[
(\tilde m^{a\star},\tilde m^{g\star})
\in \underset{(\tilde m^a,\tilde m^g)\in\mathcal D_k}
{\arg\max}
\mathcal P(\tilde m^a,\tilde m^g)
\]
\end{theorem}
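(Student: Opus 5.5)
The argument simply chains the preliminary lemmas. By Lemma~\ref{lem:P_well_defined}, $\mathcal P$ is a well-defined map $\mathcal D\to[0,1]$, so its restriction to $\mathcal D_k\subseteq\mathcal D$ is a well-defined map $\mathcal D_k\to[0,1]$. By Lemma~\ref{lem:finite_Dk}, $|\mathcal D_k|\le L+1$, so $\mathcal D_k$ is finite. Lemma~\ref{lem:max_finite} with $S=\mathcal D_k$ and $f=\mathcal P|_{\mathcal D_k}$ then gives that $\max_{(\tilde m^a,\tilde m^g)\in\mathcal D_k}\mathcal P(\tilde m^a,\tilde m^g)$ exists. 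Any point attaining this maximum is a valid choice of $(\tilde m^{a\star},\tilde m^{g\star})$, so the $\arg\max$ is nonempty.

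The only real obstacle is that Lemma~\ref{lem:max_finite} implicitly needs $S$ to be nonempty. For a general integer $k$ this is not automatic: if $k<0$ or $k>2L$, no pair $(i/L,j/L)$ with $0\le i,j\le L$ satisfies $i+j=k$, so $\mathcal D_k=\emptyset$ and the $\arg\max$ is empty. I would therefore state explicitly that the admissible budgets are $0\le k\le 2L$; this is implicit in the paper, since $k$ counts pruned layers among $2L$ candidates. Under this assumption, the witness $i=\min(k,L)$, $j=k-i$ satisfies $0\le i,j\le L$ and $i+j=k$, so $(i/L,j/L)\in\mathcal D_k$ and $\mathcal D_k\neq\emptyset$.

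With nonemptiness established, the finite set of values $\{\mathcal P(x):x\in\mathcal D_k\}\subset[0,1]$ has a largest element, attained at some $x^\star\in\mathcal D_k$, and the theorem follows. One side remark is worth noting: uniqueness of the maximizer is neither claimed nor needed. This is consistent with the $\in\arg\max$ formulation and with the tie-breaking in Algorithm~\ref{alg:pruning_search}, which keeps the first maximizer encountered.
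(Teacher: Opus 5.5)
Your proof is correct and follows the paper's argument exactly: finiteness of $\mathcal D_k$ (Lemma~\ref{lem:finite_Dk}), well-definedness of $\mathcal P$ (Lemma~\ref{lem:P_well_defined}), and attainment of a maximum on a finite set (Lemma~\ref{lem:max_finite}). Your explicit check that $\mathcal D_k\neq\emptyset$ for integer $0\le k\le 2L$ fills a small gap the paper leaves implicit, since a maximum over a finite set exists only when that set is nonempty.
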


\begin{proof}
By Lemma~\ref{lem:finite_Dk}, $\mathcal D_k$ is finite. By Lemma~\ref{lem:P_well_defined}, $\mathcal P$ is well-defined on $\mathcal D_k$. The result then follows from Lemma~\ref{lem:max_finite}.
\end{proof}

This establishes that pruning-ratio optimization is \textbf{well-posed}: for any budget constraint, an optimal solution is guaranteed to exist.

\subsection{Polynomial Approximation of the Accuracy Surface (Full Proof for Theorem~\ref{thm:poly_approx_main})}
\label{app:poly_approx}

This section provides the complete proof of Theorem~\ref{thm:poly_approx_main}, establishing that the pruning-accuracy surface admits arbitrarily accurate polynomial approximations.
\subsubsection*{Relaxation Assumption}

\begin{assumption}[Continuous Relaxation of Pruning Ratios]
\label{ass:relax}
Although structured pruning masks $(\hat m^a,\hat m^g)$ are discrete,
their normalized pruning ratios
\[
(\tilde m^a,\tilde m^g)
\in \mathcal D=\{0,\tfrac1L,\ldots,1\}^2
\subset [0,1]^2
\]
admit a continuous relaxation in which $(\tilde m^a,\tilde m^g)$
is treated as a point in the full square $[0,1]^2$.
Furthermore, the accuracy functional $\mathcal P$ evaluated on $\mathcal D$
extends to a continuous map
\[
\tilde{\mathcal P}:[0,1]^2\to\mathbb [0,1],
\qquad
\tilde{\mathcal P}|_{\mathcal D}=\mathcal P.
\]
\end{assumption}

\paragraph{Justification.}
This assumption is standard in pruning theory and inherits two empirical/structural facts:

1. \emph{Sparsity relaxations.}
Continuous formulations of pruning (e.g., magnitude relaxation, soft masks)
induce validation accuracy that varies smoothly with pruning level.

2. \emph{Observed one‑dimensional smoothness.}
Accuracy under $(\tilde m^a,0)$ or $(0,\tilde m^g)$ varies smoothly with pruning ratio,
suggesting that the bivariate surface is continuous.

Assumption~\ref{ass:relax} therefore formalizes a widely accepted—and empirically validated—
interpretation of pruning ratios as living on a continuous compact domain.

\subsubsection*{Supporting Lemmas}

\begin{lemma}[Domain compactness]
\label{lem:compact}
The domain $[0,1]^2$ is a compact Hausdorff space.
\end{lemma}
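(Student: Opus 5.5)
The plan is to verify the two defining properties separately: compactness of $[0,1]^2$ via Heine--Borel (or Tychonoff), and the Hausdorff property from the fact that $[0,1]^2$ is a metric space.

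\textbf{Compactness.} We view $[0,1]^2$ as a subset of $\mathbb R^2$ with the Euclidean topology.

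First, it is bounded, since it lies in the closed Euclidean ball of radius $\sqrt2$ centered at the origin.

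Second, it is closed. Its complement is a union of open half-planes $\{x<0\}$, $\{x>1\}$, $\{y<0\}$, $\{y>1\}$, hence open. Equivalently, $[0,1]^2=\phi^{-1}([0,1]^2)$ for the identity map, or it is the preimage of the closed set $[0,1]$ under the two continuous coordinate projections, intersected.

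The Heine--Borel theorem then gives compactness. As an alternative route, $[0,1]$ is compact, and a finite product of compact spaces is compact by Tychonoff. The product topology on $[0,1]\times[0,1]$ coincides with the subspace topology inherited from $\mathbb R^2$, so this yields the same conclusion.

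\textbf{Hausdorff property.} $[0,1]^2$ carries the restriction of the Euclidean metric $d$. Given distinct points $p\neq q$, set $r=d(p,q)/2>0$. The open balls $B(p,r)\cap[0,1]^2$ and $B(q,r)\cap[0,1]^2$ are open in the subspace topology. They are disjoint by the triangle inequality: a common point $z$ would give $d(p,q)\le d(p,z)+d(z,q)<2r=d(p,q)$, a contradiction. Hence every metric space, and in particular every subspace of $\mathbb R^2$, is Hausdorff.

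\textbf{Anticipated difficulty.} There is no real obstacle here. The only point requiring care is to state that the topology used is the subspace topology from $\mathbb R^2$, equivalently the product topology. This is the topology in which continuity of $\tilde{\mathcal P}$ in Assumption~\ref{ass:relax} is understood, and it is the setting the Stone--Weierstrass theorem requires for the subsequent proof of Theorem~\ref{thm:poly_approx_main}.
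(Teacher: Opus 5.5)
Your proof is correct, but its main line differs from the paper's one-line proof. The paper says $[0,1]$ is compact Hausdorff and finite products preserve both properties. You instead work inside $\mathbb R^2$: compactness comes from Heine--Borel (closed and bounded), and the Hausdorff property comes from the Euclidean metric and the triangle inequality. Your Tychonoff alternative for compactness is exactly the paper's argument.

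The paper's route is shorter and generalizes immediately to arbitrary products of compact Hausdorff spaces. It does, however, leave implicit which topology on $[0,1]^2$ is meant. Your route is more elementary, needing only Heine--Borel and the triangle inequality. It also states explicitly that the product and Euclidean subspace topologies agree, which is the topology in which continuity of $\tilde{\mathcal P}$ and the later Stone--Weierstrass step are understood.

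One small blemish: writing $[0,1]^2=\phi^{-1}([0,1]^2)$ for the identity map proves nothing, because it already presupposes that $[0,1]^2$ is closed. Drop it. Keep either the complement-of-open-half-planes argument or the intersection of preimages of $[0,1]$ under the coordinate projections; both are valid.
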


\begin{proof}
Each interval $[0,1]$ is compact Hausdorff; products preserve both properties.
\end{proof}

\begin{lemma}[Polynomial algebra as separating lattice]
\label{lem:lattice}
Let $A=\mathbb R[x,y]$ denote the algebra of all real bivariate polynomials.
Then:
\begin{enumerate}
\item $A$ contains all constant functions;
\item $A$ separates points of $[0,1]^2$;
\item The uniform closure $\overline{A}$ is closed under $\max$ and $\min$,
i.e.,
\[
\max(f,g),\;\min(f,g)\in \overline{A},
\qquad \forall f,g\in \overline{A}.
\]
\end{enumerate}
Hence $A$ is a \emph{separating lattice}.
\end{lemma}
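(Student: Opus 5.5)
Items 1 and 2 are elementary facts about $A=\mathbb R[x,y]$ viewed as a subalgebra of $C([0,1]^2)$, so I will dispose of them first. For item 1, the constant polynomial $c$ (total degree $0$) belongs to $A$ for every $c\in\mathbb R$. For item 2, take distinct points $(x_1,y_1)\neq(x_2,y_2)$ in $[0,1]^2$. Either $x_1\neq x_2$, and the coordinate polynomial $p(x,y)=x$ takes different values at the two points, or $x_1=x_2$ and $y_1\neq y_2$, and then $p(x,y)=y$ does. Hence $A$ separates points.

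Item 3 carries the real content. I will note first that $\overline A$ (the closure in the sup norm on the compact space of Lemma~\ref{lem:compact}) is again a unital subalgebra, because addition, scalar multiplication and multiplication are continuous for the uniform norm on bounded functions. Then I will reduce $\max$ and $\min$ to absolute values:
\[
\max(f,g)=\tfrac12\bigl(f+g+|f-g|\bigr),\qquad \min(f,g)=\tfrac12\bigl(f+g-|f-g|\bigr).
\]
It therefore suffices to show $h\in\overline A\Rightarrow |h|\in\overline A$. This step is the main obstacle. I would follow the standard route in the proof of Stone--Weierstrass. Set $M=\|h\|_\infty$; if $M=0$ there is nothing to prove. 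On $[-M,M]$ the function $s\mapsto|s|$ is a uniform limit of real polynomials $q_n(s)$. To get these I would use the one-variable Weierstrass theorem, or, to avoid circularity, the explicit polynomials approximating $\sqrt{u}$ on $[0,1]$ obtained from the monotone iteration $r_{n+1}(u)=r_n(u)+\tfrac12(u-r_n(u)^2)$, $r_0=0$, applied to $u=s^2/M^2$. I would also verify that this iteration converges uniformly, using Dini's theorem or the explicit bound $0\le \sqrt u-r_n(u)\le 2\sqrt u/(2+n\sqrt u)\le 2/n$.

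Each $q_n\circ h$ is a polynomial expression in $h$ with a constant term, so it lies in $\overline A$ by the algebra property and item 1. Moreover $\|q_n\circ h-|h|\|_\infty\le\sup_{|s|\le M}|q_n(s)-|s||\to0$. Closedness of $\overline A$ then gives $|h|\in\overline A$. Combining this with the max/min identities yields item 3. Items 1--3 together are exactly the separating-lattice property asserted, so the lattice form of Stone--Weierstrass applies. That theorem then gives $\overline A=C([0,1]^2)$, which is what Theorem~\ref{thm:poly_approx_main} will use.
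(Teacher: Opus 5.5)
Your proposal is correct and follows the same route as the paper. Items 1--2 use constants and the coordinate functions $x,y$, and item 3 uses the identities $\max(f,g)=\tfrac12(f+g+|f-g|)$ and $\min(f,g)=\tfrac12(f+g-|f-g|)$ together with polynomial approximation of $s\mapsto|s|$ on a compact interval. Your version is more complete than the paper's one-line justification. You make explicit that $\overline A$ is a unital algebra, so that $q_n\circ h\in\overline A$. You bound $\|q_n\circ h-|h|\|_\infty$ by $\sup_{|s|\le M}|q_n(s)-|s||$. You also supply a self-contained approximation of $|s|$ that does not rely on the one-variable Weierstrass theorem.
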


\begin{proof}
(1) Constant polynomials belong to $A$.

(2) If $(x_1,y_1)\neq(x_2,y_2)$, then either $x_1\neq x_2$ or $y_1\neq y_2$.
Thus $p(x,y)=x$ or $p(x,y)=y$ separates the points.

(3) For any $f,g\in\overline{A}$,
\[
\max(f,g)=\frac{f+g}{2}+\frac{|f-g|}{2},\qquad
\min(f,g)=\frac{f+g}{2}-\frac{|f-g|}{2}.
\]
Since absolute value $|\cdot|$ is continuous and can be uniformly approximated
on compact sets by polynomials,
$|f-g|\in\overline{A}$, completing the proof.
\end{proof}

\begin{lemma}[Stone--Weierstrass theorem, lattice version]
\label{lem:SW}
Let $K$ be a compact Hausdorff space and
$A\subset C(K)$ a separating lattice containing constants.
Then $\overline{A}=C(K)$ under the uniform norm.
\end{lemma}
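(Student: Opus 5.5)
The plan is the classical Kakutani--Krein argument, run inside the uniform closure $\overline{A}$. The lattice operations are available there by hypothesis, and in our application this is item (3) of Lemma~\ref{lem:lattice}. Fix $f\in C(K)$ and $\varepsilon>0$; the goal is to build $H\in\overline{A}$ with $\sup_K|H-f|<\varepsilon$. Since $\overline{A}$ is closed, this gives $f\in\overline{A}$ and hence $\overline{A}=C(K)$.

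\emph{Step 1 (two-point interpolation).} I will use that $A$ is a linear space containing the constants and separating points, which holds for $A=\mathbb R[x,y]$ by Lemma~\ref{lem:lattice}. Take $x\neq y$ in $K$ and choose $h\in A$ with $h(x)\neq h(y)$. Set
\[
g_{x,y}=f(x)+\bigl(f(y)-f(x)\bigr)\frac{h-h(x)}{h(y)-h(x)} .
\]
Then $g_{x,y}\in A$, $g_{x,y}(x)=f(x)$ and $g_{x,y}(y)=f(y)$. For $x=y$ the constant $f(x)$ works.

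\emph{Step 2 (upper envelope at a fixed point).} Fix $x$. For each $y$ the set $U_y=\{z: g_{x,y}(z)<f(z)+\varepsilon\}$ is open and contains $y$. By compactness (Lemma~\ref{lem:compact}), finitely many sets $U_{y_1},\dots,U_{y_n}$ cover $K$. Put $h_x=\min_i g_{x,y_i}$, which lies in $\overline{A}$ by the lattice property. By construction $h_x(x)=f(x)$ and $h_x<f+\varepsilon$ on all of $K$.

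\emph{Step 3 (lower envelope).} Now let $x$ vary. The sets $V_x=\{z: h_x(z)>f(z)-\varepsilon\}$ are open and $V_x$ contains $x$. Extract a finite subcover $V_{x_1},\dots,V_{x_m}$ and set $H=\max_j h_{x_j}\in\overline{A}$. Each $h_{x_j}<f+\varepsilon$, so $H<f+\varepsilon$. Every $z$ lies in some $V_{x_j}$, so $H(z)\ge h_{x_j}(z)>f(z)-\varepsilon$. Together these give $\|H-f\|_\infty<\varepsilon$.

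\emph{Main obstacle.} The delicate point is the closure bookkeeping: the $\min$ and $\max$ of Steps 2--3 must stay in $\overline{A}$, not merely in $A$. I will handle this by invoking the hypothesis that $\overline{A}$ itself is closed under $\max$ and $\min$. Should that be needed from $A$ alone, I would use the estimate $\|\max(f_1,g_1)-\max(f_2,g_2)\|_\infty\le\max(\|f_1-f_2\|_\infty,\|g_1-g_2\|_\infty)$, which shows the lattice operations are uniformly continuous and so pass to limits. The linear structure needed in Step 1 is available because $A$ is an algebra, and this is exactly where the separating and constants hypotheses enter.
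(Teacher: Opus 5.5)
Your argument is the standard Kakutani--Krein proof, and it is correct. The paper gives no proof of this lemma; it only invokes the classical theorem with a citation, so your write-up supplies what the paper leaves to the literature. Combined with item (3) of Lemma~\ref{lem:lattice}, it makes the proof of Theorem~\ref{thm:poly_approx} self-contained. The only analytic input in that item is the one-variable approximation of $|t|$ by polynomials, together with the fact that $\overline{A}$ is an algebra. You also handle correctly a point the paper glosses over: $\mathbb{R}[x,y]$ is not itself closed under $\max$ and $\min$, only its closure is, so working inside $\overline{A}$ is the right bookkeeping.

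You should state explicitly, rather than import silently from the application, the hypothesis you use in Step 1. Writing $g_{x,y}=\alpha+\beta h\in A$ requires $A$ to be a linear subspace, and this is not among the lemma's stated hypotheses (``separating lattice containing constants''). The hypothesis cannot be dropped. On $K=[0,1]$, take the nondecreasing continuous functions. They contain the constants, separate points via $t\mapsto t$, and are closed under $\max$, $\min$ and uniform limits, yet $1-t$ is at uniform distance $1/2$ from all of them. So the lemma as literally written is false.

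The correct form is the one your proof actually establishes: $A$ is a linear subspace containing the constants and separating points, and $\overline{A}$ is closed under $\max$ and $\min$. More generally, it suffices that $A$ has the two-point interpolation property, which is all that Steps 2--3 use. Stated that way, your proof is complete and covers the paper's application to $\mathbb{R}[x,y]$. The Hausdorff assumption is never used.
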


\subsubsection*{Main Result}

\begin{theorem}[Restatement of Theorem~\ref{thm:poly_approx_main}]
\label{thm:poly_approx}
Under Assumption~\ref{ass:relax}, for every $\varepsilon>0$ there exists a polynomial
$Q(x,y)\in \mathbb R[x,y]$ such that
\[
\sup_{(x,y)\in[0,1]^2}
\bigl|\tilde{\mathcal P}(x,y)-Q(x,y)\bigr|
<\varepsilon.
\]
Consequently,
\[
\max_{(\tilde m^a,\tilde m^g)\in\mathcal D}
\bigl|\mathcal P(\tilde m^a,\tilde m^g)-Q(\tilde m^a,\tilde m^g)\bigr|
<\varepsilon.
\]
\end{theorem}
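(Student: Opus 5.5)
The plan is to apply the lattice form of the Stone--Weierstrass theorem (Lemma~\ref{lem:SW}) to the algebra $A=\mathbb R[x,y]$ on $K=[0,1]^2$, and then restrict the resulting uniform bound to the finite grid $\mathcal D$.

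\textbf{Step 1: verify the hypotheses.} By Lemma~\ref{lem:compact}, $K=[0,1]^2$ is compact Hausdorff. By Assumption~\ref{ass:relax}, the relaxed accuracy $\tilde{\mathcal P}$ lies in $C(K)$. Every polynomial is continuous, so $A\subset C(K)$. By Lemma~\ref{lem:lattice}, $A$ contains the constants, separates points (via the coordinate functions $x$ and $y$), and its uniform closure is closed under $\max$ and $\min$.

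\textbf{Step 2: density.} Invoking Lemma~\ref{lem:SW} gives $\overline{A}=C(K)$ in the sup norm. In particular $\tilde{\mathcal P}\in\overline{A}$. By the definition of uniform closure, for any $\varepsilon>0$ there is a $Q\in A$ with $\sup_{K}|\tilde{\mathcal P}-Q|<\varepsilon$. This is the first claim.

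\textbf{Main obstacle.} The delicate point is that Lemma~\ref{lem:SW} is phrased for lattices, while $A$ itself is not closed under $\max$; only its closure is. Moreover, part (3) of Lemma~\ref{lem:lattice} uses polynomial approximation of $|\cdot|$, which is the one-variable Weierstrass theorem. So the argument should be routed through the algebra version of Stone--Weierstrass: $A$ is a unital subalgebra of $C(K)$ that separates points, hence it is dense. The lattice property of $\overline{A}$ is then only an intermediate step inside that proof, and there is no circularity. If that route is unavailable, a more concrete alternative is to use bivariate Bernstein polynomials
\[
B_n(x,y)=\sum_{i,j=0}^n \tilde{\mathcal P}\bigl(\tfrac in,\tfrac jn\bigr)\binom ni\binom nj x^i(1-x)^{n-i}y^j(1-y)^{n-j}.
\]
These converge uniformly to $\tilde{\mathcal P}$ by uniform continuity of $\tilde{\mathcal P}$ on compact $K$ together with Chebyshev's inequality.

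\textbf{Step 3: discrete consequence.} Since $\mathcal D\subset K$ and $\tilde{\mathcal P}|_{\mathcal D}=\mathcal P$,
\[
\max_{\mathcal D}|\mathcal P-Q|=\max_{\mathcal D}|\tilde{\mathcal P}-Q|\le\sup_K|\tilde{\mathcal P}-Q|<\varepsilon .
\]
The maximum exists because $\mathcal D$ is finite (Lemmas~\ref{lem:finite_domain} and~\ref{lem:max_finite}).
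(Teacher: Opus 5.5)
Your proof is correct and has the same structure as the paper's: density of $\mathbb R[x,y]$ in $C([0,1]^2)$ by Stone--Weierstrass, then restriction of the uniform bound to $\mathcal D\subset[0,1]^2$ using $\tilde{\mathcal P}|_{\mathcal D}=\mathcal P$. Your Step~3 is exactly the paper's last line.

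The difference is in the density step. The paper calls $A=\mathbb R[x,y]$ a ``separating lattice'' on the strength of Lemma~\ref{lem:lattice} and applies the lattice form, Lemma~\ref{lem:SW}, to $A$ directly. As you note, $A$ is not closed under $\max$ and $\min$; only $\overline{A}$ is. The issue is more than cosmetic. A lattice that contains the constants and separates points need not be dense: the lattice generated by the constants and $x$ on $[0,1]$ consists of $1$-Lipschitz functions. So the lattice form really needs a two-point interpolation property, which $\overline{A}$ has because it is a vector space. The paper's argument is therefore repaired by applying Lemma~\ref{lem:SW} to $\overline{A}$ rather than to $A$. You instead cite the algebra form, where Lemma~\ref{lem:lattice}(3) is only an internal step, and that is the cleaner citation.

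Your Bernstein fallback is a genuinely different, self-contained route. It uses only the uniform continuity of $\tilde{\mathcal P}$ and Chebyshev's inequality. It gives an explicit approximant whose error is controlled by the modulus of continuity of $\tilde{\mathcal P}$. And since $B_n$ is a convex combination of values of $\tilde{\mathcal P}\in[0,1]$, it keeps $Q$ inside $[0,1]$ on the square. The abstract density argument does not give that property, and it fits the reading of $Q$ as a predicted accuracy.
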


\begin{proof}
By Lemma~\ref{lem:compact}, $[0,1]^2$ is compact Hausdorff.
By Lemma~\ref{lem:lattice}, the polynomial algebra $A$ is a separating lattice containing constants.
Thus all hypotheses of the Stone–Weierstrass theorem (lattice version),
Lemma~\ref{lem:SW}, are satisfied.

Therefore $\overline{A}=C([0,1]^2)$, meaning that
polynomials are uniformly dense in the space of continuous functions on the domain.
Since $\tilde{\mathcal P}\in C([0,1]^2)$ by Assumption~\ref{ass:relax},
there exists a polynomial $Q$ such that
\[
\sup_{(x,y)\in[0,1]^2}
|\tilde{\mathcal P}(x,y)-Q(x,y)|<\varepsilon.
\]

Because $\mathcal D\subset[0,1]^2$,
the same inequality holds when restricted to $\mathcal D$.
\end{proof}

\subsubsection*{Remark}
Theorem~\ref{thm:poly_approx} guarantees that the empirical pruning-accuracy landscape can be approximated to arbitrary precision by finite-degree bivariate polynomials. This result provides the theoretical foundation for polynomial-based surrogate models in pruning-ratio optimization, including the MAP framework proposed in this work.

\subsection{Consistency of MAP Under Noisy Fast Finetuning (Full Proof of Theorem~\ref{thm:map_consistency_main})}
\label{app:theorem3}

This section provides a rigorous statistical justification for the MAP when observed accuracies are obtained via subset-based fast finetuning. We show that the noise introduced by fast finetuning becomes an irreducible constant in the least squares loss and does not affect the identification of optimal pruning configurations.

\subsubsection*{Observation Model and Assumptions}

\begin{assumption}[Noisy observation model]
\label{ass:noise}
For any pruning configuration $x = (\tilde m^a, \tilde m^g) \in \mathcal{D}$, let $\mathcal{P}(x)$ denote the ground-truth validation accuracy under full finetuning. The subset-based fast finetuning produces observations
\begin{equation}
\label{eq:obsmodel}
y = \mathcal{P}(x) + b + \varepsilon,
\end{equation}
where:
\begin{enumerate}
    \item $b \in \mathbb{R}$ is a constant bias independent of $x$, arising from the limited number of finetuning steps;
    \item $\varepsilon$ is a zero-mean random variable with finite variance $\mathrm{Var}(\varepsilon) = \sigma^2 < \infty$;
    \item $\varepsilon$ is independent of $x$.
\end{enumerate}
\end{assumption}

The independence assumption is justified by the fact that the randomness in fast finetuning stems from subset sampling and optimization stochasticity, which are independent of the pruning configuration.

\subsubsection*{Invariance of Maximizer Under Constant Bias}

\begin{lemma}[Bias-invariance of maximizers]
\label{lem:bias}
For any objective function $\mathcal P(x)$ and a constant $b$, the set of maximizers remains unchanged:
\[
\underset{x}{\arg\max} \, \mathcal P(x) = \underset{x}{\arg\max} \, (\mathcal P(x) + b).
\]
\end{lemma}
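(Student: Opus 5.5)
The plan is to prove the two inclusions between $\arg\max_x \mathcal P(x)$ and $\arg\max_x(\mathcal P(x)+b)$ directly from the definition of a maximizer. Throughout, $x$ ranges over a fixed domain $S$ (in our application $S=\mathcal D_k$, but nothing about $S$ is used). Recall that $x^\star\in\arg\max_x \mathcal P(x)$ means $\mathcal P(x^\star)\ge \mathcal P(x)$ for every $x\in S$.

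\emph{Forward inclusion.} Take $x^\star\in\arg\max_x\mathcal P(x)$. For any $x\in S$ we have $\mathcal P(x^\star)\ge\mathcal P(x)$. Adding the same real number $b$ to both sides preserves the inequality, since addition by a constant is order-preserving on $\mathbb R$. This gives $\mathcal P(x^\star)+b\ge \mathcal P(x)+b$ for all $x$, so $x^\star\in\arg\max_x(\mathcal P(x)+b)$.

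\emph{Reverse inclusion.} Apply the forward inclusion to the function $\mathcal P+b$ and the constant $-b$. This yields $\arg\max_x(\mathcal P(x)+b)\subseteq\arg\max_x \mathcal P(x)$, and the two inclusions together give equality of the maximizer sets.

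It is also worth noting that both sets may be empty simultaneously, for example on a domain where no maximum is attained. The identity is then still true, since the argument proves equality of sets without assuming existence. In our setting $S=\mathcal D_k$, and Theorem~\ref{thm:existence_opt} guarantees that both sets are nonempty.

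There is no real obstacle in this argument. The only point needing care is that $b$ must be independent of $x$, as stated in Assumption~\ref{ass:noise}; an $x$-dependent bias would break the step where the same quantity is added to both sides.
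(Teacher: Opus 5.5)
Your proof is correct and takes essentially the same route as the paper. The paper observes that $\mathcal P(x_1)\ge\mathcal P(x_2)$ holds if and only if $\mathcal P(x_1)+b\ge\mathcal P(x_2)+b$, so translation preserves the ordering and hence the set of maximizers; your forward inclusion, plus the reverse inclusion obtained by applying it to $\mathcal P+b$ with the constant $-b$, is simply that equivalence split into its two implications.
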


\begin{proof}
For any $x_1, x_2 \in X$, we have $\mathcal{P}(x_1) \geq \mathcal{P}(x_2)$ if and only if $\mathcal{P}(x_1) + b \geq \mathcal{P}(x_2) + b$. Thus the ordering is preserved, and the set of maximizers remains unchanged.
\end{proof}

\subsubsection*{Least Squares Decomposition}

Let $f(x; \theta)$ denote the MAP predictor parameterized by $\theta$. The population risk under squared error loss is
\[
\mathcal{L}(\theta) = \mathbb{E}_{x, \varepsilon} \bigl[ (y - f(x; \theta))^2 \bigr].
\]

\begin{lemma}[Bias-variance decomposition]
\label{lem:decomp}
Under Assumption~\ref{ass:noise}, the population risk decomposes as
\begin{equation}
\label{eq:loss_decomp}
\mathcal{L}(\theta) = \mathbb{E}_x \bigl[ (\mathcal{P}(x) + b - f(x; \theta))^2 \bigr] + \sigma^2.
\end{equation}
\end{lemma}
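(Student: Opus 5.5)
The plan is to expand the square directly. Write $y-f(x;\theta)=\bigl(\mathcal P(x)+b-f(x;\theta)\bigr)+\varepsilon$ using the observation model of Assumption~\ref{ass:noise}. Set $g(x)=\mathcal P(x)+b-f(x;\theta)$, which is a deterministic function of $x$ for fixed $\theta$. Then
\[
(y-f(x;\theta))^2=g(x)^2+2g(x)\varepsilon+\varepsilon^2 .
\]
I will take expectations over the joint law of $(x,\varepsilon)$ term by term, using linearity. I assume $\mathbb E_x[g(x)^2]<\infty$; this holds because $\mathcal P$ is bounded on the finite set $\mathcal D$ and $f$ is a polynomial there. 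Together with $\sigma^2<\infty$, this makes every term integrable.

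The first term gives $\mathbb E_x[(\mathcal P(x)+b-f(x;\theta))^2]$ directly, since $g$ depends only on $x$.

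The cross term is the key step. Because $\varepsilon$ is independent of $x$ (Assumption~\ref{ass:noise}, item 3), I factor
\[
\mathbb E[g(x)\varepsilon]=\mathbb E_x[g(x)]\,\mathbb E[\varepsilon]=0 .
\]
The factorization is legitimate since $g(x)\varepsilon$ is integrable: Cauchy--Schwarz bounds $\mathbb E|g\varepsilon|$ by $\sqrt{\mathbb E g^2}\sqrt{\mathbb E\varepsilon^2}$, and both factors are finite. Equivalently, one can condition on $x$ and use $\mathbb E[\varepsilon\mid x]=\mathbb E[\varepsilon]=0$.

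The last term is $\mathbb E[\varepsilon^2]=\mathrm{Var}(\varepsilon)+(\mathbb E\varepsilon)^2=\sigma^2$, since the noise has zero mean.

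Summing the three pieces gives \eqref{eq:loss_decomp}. The only real obstacle is justifying that the cross term vanishes, and that is exactly what independence, zero mean, and the integrability check above provide. I will also note that $\sigma^2$ does not depend on $\theta$. This is the fact used afterwards in Theorem~\ref{thm:map_consistency_main}(i).
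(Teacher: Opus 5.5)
Your proposal is correct and matches the paper's proof essentially line by line. Like the paper, you write $y-f(x;\theta)$ as a deterministic function of $x$ plus $\varepsilon$, expand the square, kill the cross term using independence and $\mathbb{E}[\varepsilon]=0$, and identify $\mathbb{E}[\varepsilon^2]=\sigma^2$; your integrability check is a small rigor addition that the paper leaves implicit.
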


\begin{proof}
Substituting the observation model \eqref{eq:obsmodel} and letting $A(x; \theta) \coloneqq \mathcal{P}(x) + b - f(x; \theta)$, we obtain
\[
\mathcal{L}(\theta) = \mathbb{E}_{x, \varepsilon} \bigl[ (A(x; \theta) + \varepsilon)^2 \bigr] = \mathbb{E}_{x, \varepsilon} \bigl[ A(x; \theta)^2 + 2A(x; \theta)\varepsilon + \varepsilon^2 \bigr].
\]
By linearity of expectation and the independence of $\varepsilon$ from $x$,
\[
\mathbb{E}_{x, \varepsilon}[A(x; \theta) \cdot \varepsilon] = \mathbb{E}_x[A(x; \theta)] \cdot \mathbb{E}[\varepsilon] = 0,
\]
since $\mathbb{E}[\varepsilon] = 0$. The result follows from $\mathbb{E}[\varepsilon^2] = \sigma^2$.
\end{proof}

\subsubsection*{Main Result}

\begin{theorem}[Consistency of MAP under noisy observations, restatement of Theorem~\ref{thm:map_consistency_main}]
\label{thm:map_consistency}
Under Assumptions~\ref{ass:relax} and \ref{ass:noise}, suppose the MAP predictor $f(x; \theta)$ is trained by minimizing the least squares loss over $N$ i.i.d.\ samples. Then:
\begin{enumerate}
    \item The noise variance $\sigma^2$ does not affect the gradient $\nabla_\theta \mathcal{L}(\theta)$;
    \item As $N \to \infty$, the optimal predictor satisfies $f^*(x) = \mathcal{P}(x) + b$;
    \item The maximizer of $f^*$ coincides with that of $\mathcal{P}$: \, $\underset{x}{\arg\max} f^*(x) = \underset{x}{\arg\max} \mathcal{P}(x)$.
\end{enumerate}
\end{theorem}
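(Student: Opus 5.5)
I would prove the three claims in order, using Lemma~\ref{lem:decomp} as the main tool. Part (i) then becomes a short computation. Lemma~\ref{lem:decomp} gives
\[
\mathcal L(\theta)=\mathbb E_x\bigl[(\mathcal P(x)+b-f(x;\theta))^2\bigr]+\sigma^2 ,
\]
and the term $\sigma^2$ does not depend on $\theta$. Differentiating under the expectation therefore gives
\[
\nabla_\theta\mathcal L(\theta)=-2\,\mathbb E_x\bigl[(\mathcal P(x)+b-f(x;\theta))\nabla_\theta f(x;\theta)\bigr].
\]
This expression contains no $\sigma^2$. The interchange of derivative and expectation is harmless, because $f$ is a polynomial in $\theta$ and $x$ and $x$ ranges over the finite set $\mathcal D$ (or the compact square). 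Hence the noise variance affects neither the gradient nor the set of minimizers of $\mathcal L$.

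For part (ii), I would first identify the population minimizer and then pass from the empirical to the population problem. Over all measurable predictors, the first term of the decomposition is nonnegative and vanishes exactly when $f(x)=\mathcal P(x)+b$ for $\mu$-almost every $x$, where $\mu$ is the sampling distribution of $x$. I would assume $\mu$ puts positive mass on every point of $\mathcal D$; this is needed for ``almost every'' to mean ``every'', and I would state it explicitly. Under that assumption $f^*=\mathcal P+b$ on $\mathcal D$. For the polynomial class, there are two routes. Either the degree is taken large enough to interpolate the finitely many points of $\mathcal D$, so the class contains $\mathcal P+b$ exactly there. Or, if the degree is fixed, Theorem~\ref{thm:poly_approx_main} shows the population minimizer is within $\varepsilon$ of $\mathcal P+b$ uniformly, for suitably large degree.

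The passage $N\to\infty$ is the step I expect to be the main obstacle, because it needs a genuine consistency argument. I would handle it with the finite support of $\mu$. Least squares on a linear-in-parameters model over a finite design reduces to fitting the empirical means $\bar y_N(x)$ at each design point. By the strong law of large numbers, $\bar y_N(x)\to\mathcal P(x)+b$ almost surely for each $x\in\mathcal D$, since $\sigma^2<\infty$. The least-squares fit is a continuous (linear) function of these means, so it converges to the population fit $f^*$. This argument also requires the design matrix to have full column rank, which again follows from $\mu$ charging enough points of $\mathcal D$.

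Part (iii) is then immediate from Lemma~\ref{lem:bias}. Since $f^*(x)=\mathcal P(x)+b$ on $\mathcal D$, we have $\arg\max_{x\in\mathcal D_k}f^*(x)=\arg\max_{x\in\mathcal D_k}\mathcal P(x)$. This maximizer is nonempty by Theorem~\ref{thm:existence_opt_main}. In the approximate (fixed-degree) case, I would add a remark: if the gap between the best and second-best values of $\mathcal P$ on $\mathcal D_k$ exceeds $2\varepsilon$, the maximizers still coincide, since an $\varepsilon$-uniform perturbation cannot reorder points separated by more than $2\varepsilon$.
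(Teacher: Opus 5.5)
Your proof is correct. For parts (i) and (iii) it is the paper's own argument: the paper also reads (i) off Lemma~\ref{lem:decomp}, since the $\sigma^2$ term is free of $\theta$, and gets (iii) from Lemma~\ref{lem:bias}.

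The two proofs differ in (ii). The paper's whole argument there is that the population squared-loss minimizer over all functions is $\mathbb E[y\mid x]=\mathcal P(x)+b$, plus a pointer to Theorem~\ref{thm:poly_approx} for expressiveness. It never proves anything about the empirical minimizer as $N\to\infty$. It never links the polynomial fit to that unrestricted minimizer. It also ignores that $\mathbb E[y\mid x]$ is determined only $\mu$-almost everywhere.

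Your version closes all three gaps. Restricting $\mu$ to the finite set $\mathcal D$ reduces the empirical problem to a weighted fit of per-configuration means, and the strong law gives almost-sure convergence. The hypothesis $\mu(x)>0$ on $\mathcal D$ is exactly what makes $f^*=\mathcal P+b$ hold at every point of $\mathcal D_k$, which is where the $\arg\max$ in (iii) is taken. Your margin remark supplies the missing bridge from Theorem~\ref{thm:poly_approx} to (iii) when the degree is fixed. The paper's version buys brevity; yours actually proves the limiting claim and exposes hypotheses the statement leaves implicit.

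A few details to tighten:
\begin{itemize}
\item With repeated design points, the empirical objective is $\sum_{x}n_x\bigl(\bar y_N(x)-f(x;\theta)\bigr)^2$ plus a $\theta$-free term. So the fit is continuous jointly in the means and the weights $n_x/N$, and both converge almost surely.
\item The fixed-degree population fit is an $L^2(\mu)$ projection. On $\mathcal D$ this gives only $|f^*(x)-\mathcal P(x)-b|<\varepsilon/\sqrt{\min_{x'\in\mathcal D}\mu(x')}$, not $<\varepsilon$. The margin must exceed twice this bound.
\item Full column rank holds on the full grid only for total degree $\kappa\le L$. It fails in your interpolation route, where only the fitted values on $\mathcal D$ are unique; that is all (iii) uses.
\item With tied maximizers, the margin argument gives inclusion of the $\arg\max$ sets rather than equality.
\end{itemize}
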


\begin{proof}
\textit{Statement 1.}
From Lemma~\ref{lem:decomp}, we have $\mathcal{L}(\theta) = \mathcal{L}_{\mathrm{clean}}(\theta) + \sigma^2$, where $\mathcal{L}_{\mathrm{clean}}(\theta) \coloneqq \mathbb{E}_x[(\mathcal{P}(x) + b - f(x; \theta))^2]$. Since $\sigma^2$ is constant with respect to $\theta$,
\[
\nabla_\theta \mathcal{L}(\theta) = \nabla_\theta \mathcal{L}_{\mathrm{clean}}(\theta).
\]

\textit{Statement 2.}
The minimizer of the population squared error loss is the conditional expectation of the target. Under Assumption~\ref{ass:noise},
\[
f^*(x) = \mathbb{E}[y \mid x] = \mathbb{E}[\mathcal{P}(x) + b + \varepsilon \mid x] = \mathcal{P}(x) + b,
\]
where the last equality uses the fact that $\mathcal{P}(x)$ and $b$ are deterministic given $x$, and $\mathbb{E}[\varepsilon] = 0$. By Theorem~\ref{thm:poly_approx}, polynomials can approximate $\mathcal{P}(x) + b$ to arbitrary precision, ensuring that the MAP model class is sufficiently expressive.

\textit{Statement 3.}
This follows immediately from Lemma~\ref{lem:bias} with $f^*(x) = \mathcal{P}(x) + b$.
\end{proof}

\subsubsection*{Remark}

The decomposition in Lemma~\ref{lem:decomp} reveals that the noise variance $\sigma^2$ acts as an irreducible error floor that does not influence the optimization landscape. The constant bias $b$ shifts the predictor uniformly but preserves the ranking of configurations. Consequently, MAP trained via least squares remains a consistent estimator for identifying optimal pruning ratios, even when observations are corrupted by the approximations inherent in fast finetuning.

\subsection{Theoretical Foundation of MAP: A Unified View}
\label{app:theory_summary}

Theorems~\ref{thm:existence_opt}--\ref{thm:map_consistency} collectively establish a rigorous mathematical foundation for the MAP framework. The three results form a logical chain that addresses existence, approximation, and robustness, respectively.

\paragraph{Existence (Theorem~\ref{thm:existence_opt}).}
The pruning-ratio optimization problem is well-posed. Since the feasible set $\mathcal{D}_k$ is finite and the accuracy functional $\mathcal{P}$ is bounded, a global maximizer $(\tilde m^{a\star}, \tilde m^{g\star})$ is guaranteed to exist for any budget constraint. This foundational result ensures that the optimization target is mathematically meaningful.

\paragraph{Approximability (Theorem~\ref{thm:poly_approx}).}
The accuracy surface $\mathcal{P}$ admits arbitrarily accurate polynomial approximations. By invoking the Stone--Weierstrass theorem under Assumption~\ref{ass:relax}, we establish that bivariate polynomials are dense in the space of continuous functions on the pruning-ratio domain. This justifies the use of polynomial regressors in MAP and guarantees sufficient expressive power to capture the underlying accuracy landscape.

\paragraph{Consistency (Theorem~\ref{thm:map_consistency}).}
MAP remains a consistent estimator even when trained on noisy observations from fast finetuning. The least squares objective decomposes such that the noise variance $\sigma^2$ contributes only an irreducible constant, leaving the optimization gradient unchanged. Meanwhile, the systematic bias $b$ uniformly shifts the predictor without altering the ranking of configurations. Consequently, the maximizer of the learned predictor coincides with that of the ground-truth accuracy surface.

\paragraph{Unified Guarantee.}
The three theorems together ensure that MAP searches for a \emph{guaranteed-to-exist} optimum (Theorem~\ref{thm:existence_opt}) using a \emph{provably expressive} approximator (Theorem~\ref{thm:poly_approx}) that \emph{consistently recovers} the true maximizer despite being trained on low-cost, noisy proxies (Theorem~\ref{thm:map_consistency}). This theoretical triad provides a complete justification for the practical effectiveness of the MAP framework.


\end{document}